\newtheorem{proposition}{Proposition}
\pgfplotsset{width=7cm,compat=1.8}
\newcommand{\cmark}{\ding{51}}%
\newcommand{\xmark}{\ding{55}}%
\journal{Medical Image Analysis}
\begin{document}

\begin{frontmatter}

\title{Local Rotation Invariance in 3D CNNs}


\author[address1]{Vincent Andrearczyk$^{*,}$}
\cortext[mycorrespondingauthor]{Corresponding author}
\ead{vincent.andrearczyk@hevs.ch}

\author[address1,address3]{Julien Fageot}

\author[address1,address4]{Valentin Oreiller}
\author[address5]{Xavier Montet}
\author[address1,address4]{Adrien Depeursinge}

\address[address1]{Institute of Information Systems, University of Applied Sciences Western Switzerland (HES-SO), Sierre, Switzerland}
\address[address3]{Harvard School of Engineering and Applied Sciences, Cambridge, Massachusetts}
\address[address4]{Centre Hospitalier Universitaire Vaudois (CHUV), Lausanne, Switzerland}
\address[address5]{Hopitaux Universitaires de Gen\`eve (HUG), Geneva, Switzerland}

\begin{abstract}
Locally Rotation Invariant (LRI) image analysis was shown to be fundamental in many applications and in particular in medical imaging where local structures of tissues occur at arbitrary rotations. LRI constituted the cornerstone of several breakthroughs in texture analysis, including Local Binary Patterns (LBP), Maximum Response 8 (MR8) and steerable filterbanks. Whereas globally rotation invariant Convolutional Neural Networks (CNN) were recently proposed, LRI was very little investigated in the context of deep learning.
LRI designs allow learning filters accounting for all orientations, which enables a drastic reduction of trainable parameters and training data when compared to standard 3D CNNs. In this paper, we propose and compare several methods to obtain LRI CNNs with directional sensitivity. Two methods use orientation channels (responses to rotated kernels), either by explicitly rotating the kernels or using steerable filters.
These orientation channels constitute a locally rotation equivariant representation of the data. Local pooling across orientations yields LRI image analysis. Steerable filters are used to achieve a fine and efficient sampling of 3D rotations as well as a reduction of trainable parameters and operations, thanks to a parametric representations involving solid Spherical Harmonics (SH), which are products of SH with associated learned radial profiles.
Finally, we investigate a third strategy to obtain LRI based on rotational invariants calculated from responses to a learned set of solid SHs. The proposed methods are evaluated and compared to standard CNNs on 3D datasets including synthetic textured volumes composed of rotated patterns, and pulmonary nodule classification in CT. The results show the importance of LRI image analysis while resulting in a drastic reduction of trainable parameters, outperforming standard 3D CNNs trained with rotational data augmentation. 

\end{abstract}
\begin{keyword}
Local rotation invariance\sep convolutional neural network\sep steerable filters\sep 3D texture
\end{keyword}
\end{frontmatter}

\section{Introduction}
\label{sec:intro}


Convolutional Neural Networks (CNNs) have been successfully used in various studies to analyze textures.
By construction, CNN operations are translation equivariant, thus particularly adapted to image analysis where objects of interest have arbitrary locations.
In this paper, we propose to incorporate Local Rotation Invariance (LRI) into the CNN architecture, which is known to be crucial for texture analysis and biomedical applications in general because objects and patterns of interest have most often arbitrary orientations~(\cite{depeursinge2018biomedical}).

Globally Rotation Invariant (RI) CNNs have recently been studied, making use of group theory to maintain rotation equivariance throughout the layers.
The 2D Group equivariant CNNs (G-CNN\footnote{When referring to G-CNNs, we consider discrete designs with right angle rotations, while it is defined as a more general framework including continuous designs in~\cite{bekkers2019b}.}), developed in \cite{CoW2016b}, uses rotated (right-angles only) versions of the filters together with appropriate channels permutations.
RI is then obtained by pooling across orientation channels after the last convolutional layer.
3D G-CNNs were shown to improve detection of pulmonary nodule detection in \cite{winkels2019pulmonary} and classification of 3D textures in \cite{andrearczyk2018rotational}, yet the latter study motivated the use of a finer rotation sampling than right-angle rotations to capture realistic arbitrary 3D orientations of directional patterns.
G-CNNs achieve equivariance with respect to \emph{finite} subgroups of the rotation group, which constitutes a bottleneck in 3D.
In 2D, an arbitrary sampling of rotations can be used in a group equivariant approach~(\cite{bekkers2018roto}), while the number of 3D finite rotation groups is restrained.
Both 2D harmonic networks~(\cite{WGT2016}) and 2D steerable CNNs~(\cite{weiler2017learning}) present similarities with the method proposed in this paper although in the 2D domain.
Some recent work consider neural networks on non-Euclidian domains~(\cite{kondor2018generalization}), in particular in the $2$-dimensional sphere, where the invariance to rotations plays a crucial role as in~\cite{kondor2018clebsch} and \cite{cohen2018spherical}.
Finally, 3D steerable CNNs
such as proposed in~\cite{weiler20183d} are very general architectures implementing global equivariance to rotations on the network, and the convolutional layer considered in this paper is covered by their design although not specifically investigated. In particular, the proposed LRI layers are specialized instances of a discrete~(\cite{winkels2019pulmonary}) and steerable G-CNN~(\cite{weiler20183d}).
We differ from their work by making an angular max-pooling after the first convolution layer, which exploits the steerability of the filters, and more importantly, focuses on the sought-after local invariances. While G-CNNs can encode complex objects, we focus on textures with local patterns.

In the above-mentioned approaches, global rotation equivariance is maintained all along with the layers (see Fig.~\ref{fig:lri}, left), and invariance is obtained by using orientation pooling at the end of the network after spatial average pooling.
Global RI is fundamental in various applications, e.g. to analyze pictures taken with arbitrary orientations of the camera.
However, most images are composed of well-defined substructures having arbitrary orientations.
For instance, patterns of interest in medical imaging modalities such as Computed Tomography (CT) and Magnetic Resonance Imaging (MRI) consist of tissue alterations with characteristic 3D textures signatures including necrosis, angiogenesis, fibrosis, or cell proliferation~(\cite{GGG2013}).
These alterations induce imaging signatures such as blobs, intersecting surfaces and curves.
These local low-level patterns are characterized by discriminative directional properties and have arbitrary 3D orientations, which requires combining directional sensitivity with LRI.
When compared to equivariant designs, LRI allows to discard the information on local pattern orientation, resulting in more lightweight CNNs.
\begin{figure}[h!]
	\centering
	\includegraphics[width=1\textwidth]{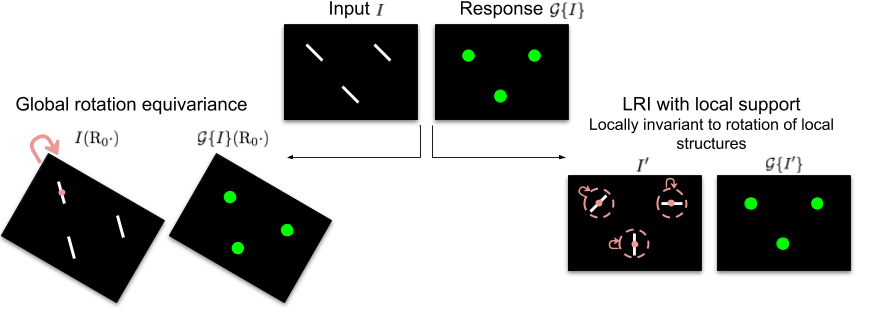}
	\caption{Illustration of global RI and LRI in 2D. Rotating local structures (i.e. three white segments) in the input $I$ results in the input $I'$ on the right. The green dots illustrate the equivariant/invariant responses. 
  Local and global rotations are shown in red and the local support $G$ of the operator $\mathcal{G}$ (see Section~\ref{sec:equiv_local_oper}) is represented as a dashed red line. It is worth noting that our CNN architecture will both present a global equivariance and a local invariance to rotations.
  Best viewed in color.}
	\label{fig:lri}
\end{figure}

However, RI is often antagonistic with the aim of being sensitive to directional features.
For instance, a spatial image operator that is purely convolutional is equivariant to rotations if and only if the filter is isotropic (see Section~\ref{sec:equiv_local_oper} and~\cite{cohen2019general,bekkers2019b}), therefore insensitive to the directional features of the input signal.
It follows that operators combining LRI and directional sensitivity (i.e. non-isotropic) require using more complex designs such as MR8 (\cite{VaZ2005}), local binary patterns (\cite{OPM2002}), steerable Riesz wavelets (\cite{DicenteCid2017}), circular or Spherical Harmonic (SH) invariants (\cite{depeursinge2018rotation}), sparse coding with steerable atoms~\cite{MUD2020} and scattering transform~(\cite{eickenberg2017solid}).
These designs were widely used in hand-crafted texture analysis (\cite{Liu2019,depeursinge2018biomedical}).

In this paper, we propose three 3D CNN architectures that are both globally equivariant and locally invariant to rotations (see Fig.~\ref{fig:lri} for an illustration in  2D), and can combine this with directionally sensitive image analysis.
This can be achieved by convolving with rotated filters (i.e. G-convolution~(\cite{winkels2019pulmonary}, referred to as G-LRI), steered responses to SHs (\cite{andrearczyk2019exploring}, referred to as S-LRI), or Solid Spherical Energy (SSE) invariants calculated from SH responses (\cite{andrearczyk2019solid}, referred to as SSE-LRI). Experiments in Section~\ref{sec:exp} show the benefit of LRI designs over standard CNNs (Tables~\ref{tab:res_all_synthetic}-\ref{tab:ri_lri_nlst}) and globally rotation invariant designs (Tables~\ref{tab:ri_lri_synthetic}, \ref{tab:ri_lri_nlst}) on synthetic textures and lung nodule datasets where local patterns occur at random orientations.
%


\section{Methods}
\label{sec:methods}
This section is organized as follows.
After clarifying mathematical notations in Section~\ref{sec:notations}, we first define a general 3D LRI operator in Section~\ref{sec:equiv_local_oper}.
Section~\ref{sec:steerableFilters_and_SHs} introduces the mathematical tools used in this paper: steerable filters and spherical harmonics.
The different methods that we use to implement the operator, namely G-LRI (based on the G-CNN), S-LRI (Steerable LRI), SSE-LRI (Solid Spherical Energy LRI) are detailed in Section~\ref{sec:lri}.
We then introduce global RI in Section~\ref{sec:ri}, which is further compared against LRI approaches in Section~\ref{sec:localVSglobalRI}. 
Finally, the discretization, datasets, network architectures and weights initialization are presented in Sections~\ref{sec:discretization}, \ref{sec:datasets}, \ref{sec:architecture} and \ref{sec:initialization} respectively.

\subsection{Notations}\label{sec:notations}
We initially introduce the frameworks in the continuous domain, hence 3D images, filters, and response maps are functions defined over the continuum $\mathbb{R}^3$. 
We shall also discuss the practical discretization of the different methods (Section \ref{sec:discretization}).
Spherical coordinates are defined as $(\rho,\theta,\phi)$ with radius $\rho \geq 0$, elevation angle $\theta \in [0,\pi]$, and horizontal plane angle $\phi \in [0,2\pi)$.
The set of 3D rotations is denoted by $SO(3)$.
A 3D rotation transformation matrix $\mathrm{R}$ can be decomposed as three elementary rotations around $z$, $y'$ and $z''$ axes as $\mathrm{R} = \mathrm{R}_{\alpha}\mathrm{R}_{\beta}\mathrm{R}_{\gamma}$, with the orientation $(\alpha,\beta,\gamma)$ parameterized by the (intrinsic) Euler angles $\alpha \in [0,2\pi)$, $\beta \in [0,\pi]$, and $\gamma \in [0,2\pi)$ respectively. 
We will use interchangeably $\mathrm{R}$ as a rotation transformation acting on $\mathbb{R}^3$ and on the two-dimensional sphere $\mathbb{S}^2$. Finally, the function $\bm{x} \mapsto f(\mathrm{R} \bm{x})$ is denoted by $f(\mathrm{R}\cdot)$.
\subsection{Equivariant Image Operators and Invariant Image Features}\label{sec:equiv_local_oper}
%

We introduce the general class of image operators of interest that will be used in the first layer of our neural network and common between G-LRI, S-LRI and SSE-LRI.
An image operator $\mathcal{G}$ associates to an image $I$ another image, denoted by $\mathcal{G}\{I\}$. 
The following invariance properties will be relevant for our analysis:
\begin{itemize}
    \item An operator $\mathcal{G}$ is globally \emph{equivariant to translations and rotations}, if, for any position $\bm{x}_0 \in \mathbb{R}^3$ and rotation $\mathrm{R}_0 \in SO(3)$, 
\begin{align}
    \mathcal{G}\{ I (\cdot - \bm{x}_0) \}  & = \mathcal{G}\{I\} (\cdot - \bm{x}_0) \quad \text{ for any } \bm{x}_0 \in \mathbb{R}^3, \label{eq:transinv} \\
       \mathcal{G}\{ I (\mathrm{R}_0 \cdot) \} & = \mathcal{G}\{I\} (\mathrm{R}_0 \cdot)  \quad \text{ for any } \mathrm{R}_0 \in SO(3). \label{eq:rotinv}
\end{align}

    In particular, if $\mathrm{R}_{\bm{x}_0}$ is a rotation around $\bm{x}_0 \in \mathbb{R}^3$, we have that $\mathcal{G}\{ I( \mathrm{R}_{\bm{x}_0}\cdot) \} = \mathcal{G}\{I\}(\mathrm{R}_{\bm{x}_0} \cdot)$, as illustrated on the left part of Fig.~\ref{fig:lri}.
    \item An operator $\mathcal{G}$ is \emph{local} if there exists $\rho_0 > 0$ such that, for every $\bm{x}$, the quantity $\mathcal{G} \{I\} (\bm{x})$ only depends on local image values  $I(\bm{y})$ for $\lVert \bm{y} - \bm{x}\rVert \leq \rho_0$.
 
 \end{itemize}
The global equivariance to translations and rotations together with the locality result in the sought-after invariance to local rotations (i.e. LRI) in the following sense: the rotation of an object or localized structure of interest in the image $I$ around a position $\bm{x}$ does not affect the value of $\mathcal{G}\{I\} (\bm{x})$, as illustrated on the right part of Fig. \ref{fig:lri}. We illustrate the different notions for the case of linear convolution operators in the next result.

\begin{proposition}
\label{prop:linearRI} 
Let $\mathcal{G}$ be a linear convolution operator of the form 
\begin{equation}
    \mathcal{G} \{I \} = h * I
\end{equation} 
with $h$ the impulse response of the filter.
Then, $\mathcal{G}$ is globally equivariant to rotations if and only if $h$ is isotropic, \emph{i.e.}, $h(\mathrm{R}_0 \cdot ) = h$ for any rotation $\mathrm{R}_0 \in SO(3)$. 
Moreover, $\mathcal{G}$ is local if and only if $h$ is compactly supported. Therefore, $\mathcal{G}$ is LRI if only if $h$ is compactly supported and isotropic.
\end{proposition}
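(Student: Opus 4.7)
The plan is to handle the three assertions separately: first the rotation-equivariance characterization, then the locality characterization, and finally combine the two to obtain the LRI statement. Translation equivariance of any convolution operator $\mathcal{G}\{I\}=h*I$ is automatic by a routine change of variables, so within the global equivariance requirement, only the rotational part carries a rigidity condition on $h$.

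For the first part, I would write both sides of \eqref{eq:rotinv} as explicit integrals,
\begin{equation*}
    \mathcal{G}\{I(\mathrm{R}_0\cdot)\}(\bm{x}) = \int h(\bm{x}-\bm{y})\, I(\mathrm{R}_0\bm{y})\, d\bm{y}, \qquad \mathcal{G}\{I\}(\mathrm{R}_0\bm{x}) = \int h(\mathrm{R}_0\bm{x}-\bm{y})\, I(\bm{y})\, d\bm{y},
\end{equation*}
and apply the substitution $\bm{y}\mapsto \mathrm{R}_0^{-1}\bm{u}$ in the first integral (valid because $|\det\mathrm{R}_0|=1$). This transforms the first integral into $\int h(\bm{x}-\mathrm{R}_0^{-1}\bm{u})\,I(\bm{u})\,d\bm{u}$. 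Since equality must hold for every input $I$, a standard density/testing argument (e.g., approximations of the Dirac delta) yields the pointwise identification $h(\bm{x}-\mathrm{R}_0^{-1}\bm{u}) = h(\mathrm{R}_0\bm{x}-\bm{u})$ a.e.\ in $(\bm{x},\bm{u})$. Specializing $\bm{x}=\bm{0}$ and relabelling $\bm{v}=-\bm{u}$ gives $h(\mathrm{R}_0^{-1}\bm{v})=h(\bm{v})$ for every $\mathrm{R}_0\in SO(3)$, which is exactly isotropy of $h$. The converse runs the same substitution in the opposite direction: assuming $h(\mathrm{R}_0\cdot)=h$, one rewrites $h(\bm{x}-\mathrm{R}_0^{-1}\bm{u}) = h\bigl(\mathrm{R}_0(\bm{x}-\mathrm{R}_0^{-1}\bm{u})\bigr) = h(\mathrm{R}_0\bm{x}-\bm{u})$, so the two integrals coincide.

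For locality, I would observe that $\mathcal{G}\{I\}(\bm{x}) = \int h(\bm{z})\,I(\bm{x}-\bm{z})\,d\bm{z}$ depends on $I$ only through its values on $\bm{x}-\mathrm{supp}(h)$. Hence if $h$ is supported in $B(\bm{0},\rho_0)$ then $\mathcal{G}\{I\}(\bm{x})$ sees only $I|_{B(\bm{x},\rho_0)}$, giving locality with radius $\rho_0$. The converse is the contrapositive: if $h$ does not vanish outside every $B(\bm{0},\rho_0)$, then feeding $\mathcal{G}$ test inputs concentrated on $\mathrm{supp}(h)$ shifted outside $B(\bm{x},\rho_0)$ produces a non-zero contribution to $\mathcal{G}\{I\}(\bm{x})$, contradicting locality. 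Combining both characterizations yields the LRI statement directly: $\mathcal{G}$ is globally equivariant to translations and rotations and local if and only if $h$ is simultaneously isotropic and compactly supported.

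The only mildly delicate step is the pointwise identification of $h$ from equality of convolutions against every $I$; this is really a distributional uniqueness statement for convolution kernels, and in the smooth or $L^1$ regime implicit in the paper it can be dispatched by testing against an approximate identity or by a one-line Fourier argument, so I would simply invoke it rather than formalize it in full.
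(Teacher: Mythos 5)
Your proof is correct and follows essentially the same route as the paper's: a change of variables transfers the rotation from the input onto the kernel, equality of the resulting convolutions for every $I$ forces $h(\mathrm{R}_0^{-1}\cdot)=h$, and locality is read off from the support of $h$. The only cosmetic difference is that you write the integrals explicitly where the paper uses the identity $\left(f*g(\mathrm{R}_0^{-1}\cdot)\right)=\left(f(\mathrm{R}_0\cdot)*g\right)(\mathrm{R}_0^{-1}\cdot)$, and both arguments leave the kernel-identification step (equal convolutions against all inputs imply equal kernels) as a standard fact, which you at least flag explicitly.
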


The proof of Proposition \ref{prop:linearRI} is given in \ref{app:Glinear}.
The result is elementary, and can be deduced using general frameworks, such as~\cite[Theorem 1]{bekkers2019b}. It reveals that linear operators can only fulfill the required equivariances using isotropic filters, which are insensitive to the directional information and thus very limited~(\cite{depeursinge2018rotation}).
The operators used in this paper are therefore non-linear.

\subsection{Steerable Filters and Spherical Harmonics}\label{sec:steerableFilters_and_SHs}
This subsection introduces the mathematical toolbox required to characterize the proposed S-LRI and SSE-LRI approaches, which both rely on parametric kernel representations based on solid SHs.
In particular, we consider filters $f$ expanded in terms of the family of SHs $(Y_{n,m})_{n\geq 0, \ m\in \{-n \ldots n \}}$, where $n$ is called the degree and $m$ the order, and which forms an orthonormal basis for square-integrable functions $g(\theta,\phi)$ on the sphere $\mathbb{S}^2$. We consider finitely many degrees, $N \geq 0$ being the maximal one. The number of elements of a SH family of maximum degree $N$ is $\sum_{n=0}^N (2n+1) = (N+1)^2$.
The expression of SHs can be found in~\ref{app:SH}.
 We say that a function $f : \mathbb{R}^3 \rightarrow \mathbb{R}$ is a solid SH\footnote{Note that the terminology is sometimes used when dealing with radial profiles following power law (\cite{eickenberg2017solid}).} if it is a product of a SH with a purely radial function; that is, if it can be written as $f(\rho, \theta, \phi) = h(\rho) Y_{n}^m(\theta,\phi)$.

The S-LRI uses steerable filters (see Section~\ref{sec:s-lri}), which have the advantage to allow for fast and efficient computation of the LRI representation required for max-pooling over orientations channels to further achieve invariance~(\cite{chenouard20123d,fageot2018principled}).
A filter is steerable if any of its rotated versions can be written as a linear combination of finitely many basis filters~(\cite{freeman1991design,UnC2013}).

In this paper, we consider 3D steerable filters $f : \mathbb{R}^3 \rightarrow \mathbb{R}$ of the form 

\begin{equation} \label{eq:formoffilterinterm}
	f(\rho,\theta,\phi) = \sum_{n=0}^{N} h_n(\rho)  \sum_{m=-n}^{n}{ \mathrm{C}_n[m] Y_{n,m} (\theta,\phi)},
\end{equation}
where the $h_{n}(\rho) \in \mathbb{R}$ are degree-dependent radial profiles and the coefficients $\mathrm{C}_n[m] \in \mathbb{C}$ determine the angular structure of $f$.

Many work deal with steerable filters that are polar-separable. This means that 
$f$ can be decomposed as $f(\rho,\theta,\phi)= h(\rho) g(\theta, \phi)$. A steerable filter of the form \eqref{eq:formoffilterinterm}
is polar-separable if and only if it can be written as
 \begin{equation} \label{eq:formoffilterpolarsep}
	f(\rho,\theta,\phi) = h(\rho) \sum_{n=0}^{N}   \sum_{m=-n}^{n}{ \mathrm{C}_n[m] Y_{n,m} (\theta,\phi)},
\end{equation}	 
with $h$ a single radial profile that captures the radial pattern of the filter.
The polar separable case \eqref{eq:formoffilterpolarsep} is a particular case of \eqref{eq:formoffilterinterm}, it corresponds to the situation when $h_n$ does not depend on $n$. In the sequel, we keep track on the index $n$, which covers both cases.

The condition of $f$ being real is translated into the conditions that $h$ or $h_n$  themselves are real and that the SH coefficients satisfy $\mathrm{C}_n[-m] = (-1)^{m}\overline{\mathrm{C}_n[m]}$ (see \ref{sec:proof1}).


For any rotation $\mathrm{R}\in SO(3)$, the rotated version $Y_{n,m}(\mathrm{R}\cdot)$ of a SH can be expressed as a linear combination of all elements in a degree subspace $n$ as
\begin{equation} 
\label{eq:YnmYnm'}
     Y_{n,m}(\mathrm{R}\cdot) = \sum_{m'=-n}^n \mathrm{D}_{\mathrm{R},n}[m,m'] Y_{n,m'},
 \end{equation} 
where the $\mathrm{D}_{\mathrm{R},n} \in  \mathbb{C}^{(2n+1)\times(2n+1)}$ are the Wigner matrices~(\cite{varshalovich1988quantum}). 
Then, the steerable filter $f$ can be rotated efficiently with any $\mathrm{R} \in SO(3)$ to obtain a set of steered coefficients $\mathrm{C}_{\mathrm{R},n}=\mathrm{D}_{\mathrm{R},n} \mathrm{C}_n$ of $f(\mathrm{R}\cdot)$, with $\mathrm{C}_n = (\mathrm{C}_n[m])_{m \in \{-n , \ldots , n\}}$. 
The rotated filter $f(\mathrm{R}\cdot)$ is given by
\begin{align}
    \label{eq:rotatedfilter}
    f(\mathrm{R}\cdot) (\rho,\theta,\phi) 
    &= \sum_{n = 0}^N   h_n(\rho) \sum_{m=-n}^{n} \sum_{m'=-n}^{n} \mathrm{D}_{\mathrm{R},n}[m,m'] \mathrm{C}_n[m'] Y_{n,m}(\theta,\phi).
\end{align}
From \eqref{eq:rotatedfilter}, we see that any rotated version of $f$ can be computed from the coefficients $(\mathrm{C}_n[m])_{0\leq n \leq N,-n\leq m \leq n}$.

In~\cite{andrearczyk2019exploring}, we only considered polar separable filters, in the sense that $f$ can be written as $f(\rho,\theta,\phi) = h(\rho) g(\theta,\phi)$ with $h : \mathbb{R}^+ \rightarrow \mathbb{R}$ and $g : \mathbb{S}^2 \rightarrow \mathbb{R}$, as is the case in \eqref{eq:formoffilterpolarsep}. 

Using a shared radial profile for all SHs results in a reduction of trainable parameters, at the cost of limited SH parametric approximation capability (restricted to polar separable patterns). The extension to non-polar separable filters of the form \eqref{eq:formoffilterinterm} is an important contribution of this paper.

\subsection{Locally Rotation Invariant 3D CNNs} \label{sec:lri}
This section details the three proposed strategies  to achieve 3D LRI image analysis.
An overview of the three methods is depicted in Fig.~\ref{fig:lri_overview},
and a qualitative comparison is presented in Table~\ref{tab:qualitative}.

\begin{figure}[h!]
	\centering
	\includegraphics[width=1\textwidth]{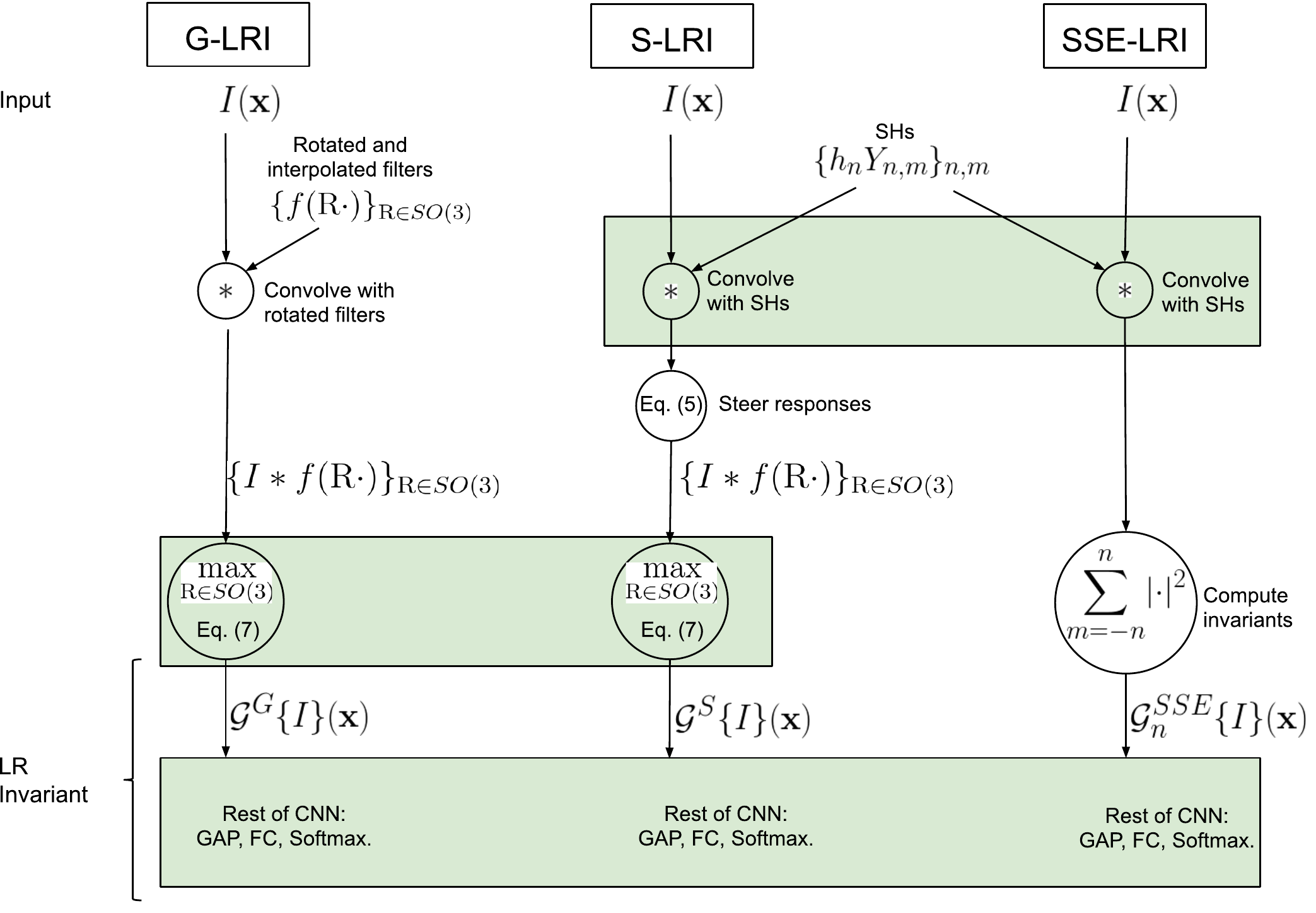}
	\caption{Overview of the methods used to obtain LRI including Group-equivariant LRI (G-LRI), Steerable LRI (S-LRI) and Solid Spherical Energy LRI (SSE-LRI). Operations shared by multiple methods are highlighted in green.}
	\label{fig:lri_overview}
\end{figure}
\begin{table}[h]
\centering
\begin{tabular}{ccccc}
  \bfseries Model & \bfseries LRI & \bfseries \thead{Rotation \\ weight sharing} & \bfseries \thead{Parametric \\ representation} & \bfseries \thead{No need for\\ orientation channels}  \\
  \hline  
  Z3-CNN & \xmark & \xmark & \xmark & \cmark \\
  G-LRI & \cmark & \cmark & \xmark & \xmark \\
  S-LRI & \cmark & \cmark & \cmark & \xmark \\
  SSE-LRI & \cmark & \cmark & \cmark & \cmark \\
  \hline  
  \end{tabular}
\caption{Qualitative comparison of the considered 3D CNN frameworks.}
\label{tab:qualitative}
\end{table}

\subsubsection{G-LRI}\label{sec:g-lri}

The first method to obtain LRI is to use rotated versions of the kernels, i.e. via weight sharing across orientation channels. 
LRI is obtained by max-pooling over the rotations and the corresponding image operator is
\begin{equation}\label{eq:Gg}
    \mathcal{G}^{G} \{I \}(\bm{x}) = \max_{\mathrm{R} \in SO(3)} \left\lvert (I* f(\mathrm{R} \cdot) ) (\bm{x}) \right\rvert,
\end{equation} 
where $f$ is characterized by trainable parameters as in a classical CNN, i.e. full 3D kernels. 
The proof of equivariance to translation and rotation as \eqref{eq:transinv} and \eqref{eq:rotinv} is provided in~\ref{app:equivariance_s}. Moreover, the operator $\mathcal{G}^G$ is local if and only if the filter $f$ has a finite support, what we assume from now.

The idea of max pooling over oriented filter responses has been long used in computer vision, e.g. for template matching with cross-correlation~(\cite{brown1992survey}).
More recently, the idea of rotating the CNN kernels has been widely used in the literature in the context of equivariance to groups of rotations~(\cite{CoW2016b,winkels2019pulmonary,worrall2018cubenet}). In particular, the 3D G-CNN developed in~\cite{winkels2019pulmonary} offers equivariance to groups of 3D rotations. In reference to this work, we refer to this first approach as G-LRI even though we do not propagate the equivariance to deeper layers and neither require to perform operations on finite groups.


\subsubsection{Steerable LRI} \label{sec:s-lri}
S-LRI is a special case of G-LRI for which the computation exploits steerability.
Such S-LRI layers were proposed in~\cite{andrearczyk2019exploring} with polar separable filters only. 
Here we define S-LRI for both polar separable (S-LRI-$h$) and non-polar separable (S-LRI-$h_n$) pattern approximation methods, \eqref{eq:formoffilterpolarsep} and \eqref{eq:formoffilterinterm}, respectively.
As mentioned before, we keep track on the index $n$ which covers both cases.

The S-LRI operator $\mathcal{G}^S \{I\} (\bm{x})$ is obtained by max-pooling over the rotations as in \eqref{eq:Gg}:
\begin{equation}\label{eq:Gs}
    \mathcal{G}^{S} \{I \}(\bm{x}) = \max_{\mathrm{R} \in SO(3)} \left\lvert (I* f(\mathrm{R} \cdot) ) (\bm{x}) \right\rvert,
\end{equation} 
where the filter $f$ is in this case of the form \eqref{eq:formoffilterinterm} and is assumed to have a finite support. The operator $\mathcal{G}^{S}$ is defined identically to $\mathcal{G}^{G}$ in \eqref{eq:Gg} but we use different notations to keep in mind that the parametrization of the filters $f$ differ. As we have seen, the image operators \eqref{eq:Gg} and \eqref{eq:Gs} are  equivariant to rotations and translations. It is moreover local as soon as the $h_n$ have a finite support and, therefore, LRI.

Exploiting \eqref{eq:rotatedfilter}, the convolution $I * (f(\mathrm{R}\cdot))$ is then computed as
\begin{equation}
	I*f(\mathrm{R}\cdot)=\sum_{n = 0}^N \sum_{m=-n}^{n} \left( \sum_{m'=-n}^{n}
  \mathrm{D}_{\mathrm{R},n}[m,m'] \mathrm{C}_n[m'] \right)  \left(I*h_n Y_{n,m}\right) .
\end{equation}
Therefore, one accesses the convolution with any (virtually) rotated version of $f$ by computing $\sum_{n=0}^{N} (2n+1) = (N+1)^2$ convolutions $\left(I*h_n Y_{n,m}\right)$, which we shall exploit for computing the response map of the image operator.
It is worth noting that the case $N=0$ corresponds to filters $f$ that are isotropic, i.e. $f(\mathrm{R}\cdot) = f$ for any $\mathrm{R} \in SO(3)$~(\cite{depeursinge2018rotation}).
As low degrees (e.g. $N=1,2$) are sufficient to construct small filters (see Section \ref{sec:radial_profiles}), the gain becomes substantial over a G-CNN approach for a fine sampling of orientations with a drastic reduction of the number of convolutions. 


In practice, one has a set of steerable filters $f_i$ of the form \eqref{eq:formoffilterinterm} with radial profiles $h_{i,n}$ and coefficients $\mathrm{C}_{i,n}[m]$. 
When compared to the G-LRI, the number of trainable parameters is reduced to $\mathrm{C}_{i,n}[m]$, $h_{i,n}$, and the biases added after orientation pooling (one scalar parameter per output channel $i$).
%

\subsubsection{Solid Spherical Energy LRI} \label{sec:sse-lri}
The use of solid SH representations, i.e. SH representations with radial profiles, provides the opportunity to compute rotational invariants from simple non-linear operations.
Initially proposed in~\cite{andrearczyk2019solid} for polar separable (SSE-LRI-$h$) kernels, we extend the invariants to the non-polar separable (SSE-LRI-$h_n$) case. Here we re-use most of the concepts used for the S-LRI. However, instead of steering, we calculate invariants directly from the responses of the solid SHs, which obviates the need to construct an intermediate (discretized) locally rotation equivariant representation.

After convolution with the image $I$, the responses $I*h_nY_{n,m}$ with $m=-n,\ldots , n$ contain the spectral information of degree $n$, which is used to define the image operator $\mathcal{G}_n^{SSE}$ as
\begin{equation} \label{eq:Gsse}
    \mathcal{G}_n^{SSE} \{I \}(\bm{x}) = \sum_{m=-n}^n \lvert (I * h_n Y_{n,m}) ( \bm{x} ) \rvert^2.
\end{equation}
Let us study the desirable properties of $\mathcal{G}_n^{SSE}$ in the following. 
At a fixed spatial position $\bm{x} \in \mathbb{R}^3$, the projection $(I * h_n Y_{n,m}) ( \bm{x} )= \langle h_n Y_{n,m}, I( \bm{x} - \cdot ) \rangle$ measures the correlation of $h_n Y_{n,m}$ with $I$ at $\bm{x}$.
We call $\mathcal{G}_n^{SSE}\{I\}$ the \emph{SSE response map of degree $n$} of $I$.
The latter are equivariant to translations and (global) rotations as defined in \eqref{eq:transinv} and \eqref{eq:rotinv}. The proof is given in \ref{app:equivariance_sse}.
Note that \eqref{eq:Gsse} defines an operator with rotational equivariance, while being sensitive to directional information via spherical frequencies of degree $n>0$. 
Moreover, the image operator $\mathcal{G}^{SSE}$ is local if and only if the radial profiles $h_n$ have a finite support, what we always assume thereafter.
Finally, more complete invariant quantities can be computed from the solid spherical harmonic representation~(\cite{kazhdan2003rotation,oreiller2020bispectrum,kakarala2012bispectrum}).

\subsection{Global RI}\label{sec:ri}
In this section, we define a global RI layer which will later be used to compare against local invariance.
As defined in Section~\ref{sec:equiv_local_oper} and illustrated in Fig.~\ref{fig:lri} (right), an LRI operator is invariant to local rotations that are not constrained to be identical at every position $\bm{x}_0$.
This is required to characterize important local structures (e.g. textons) having arbitrary and most likely different local orientations.
In this section, we want to compare this LRI with the case where the local rotations use one shared orientation across all positions of the entire image. 
Note that this setting is similar to using standard kernels convolved on the entire image in a regular CNN.
We choose this orientation so that a global RI is achieved (i.e. as illustrated in Fig. \ref{fig:lri} left) but without invariance to local rotations of patterns (Fig. \ref{fig:lri} right).
This global RI can be obtained from equivariant representations with orientation channels by first using a spatial Global Average Pooling (GAP),
followed by max-pooling on the orientation channels. In this
way, the average response is
invariant to global rotations $\mathrm{R}$, resulting in a scalar feature $\mu^{RI}$ given by
\begin{equation}
    \label{eq:gri}
    \mu^{RI} \{I\}= \max_{\mathrm{R} \in SO(3)} \int_{\mathbb{R}^3}\left\lvert (I* f(\mathrm{R} \cdot) ) (\bm{x}) \right\rvert \mathrm{d}\bm{x}.
\end{equation}
Note that the order of the GAP and the orientation max-pooling operations is simply swapped as compared to the aggregation of the G-LRI and S-LRI. We can think of equation \eqref{eq:gri} as finding the rotation of the image $I$ that maximizes the average response to the filter $f$.
Note that this RI layer shares similar ideas with a test-time augmentation. However, the filters are rotated rather than the images, the maximum is taken individually for each filter, and it is also applied at training time.

\subsection{Discretization}\label{sec:discretization}
The discretization of the methods, defined so far in the continuous domain, is necessary for their implementation and naturally introduces an approximation of the invariance properties defined in Section \ref{sec:equiv_local_oper}.

\subsubsection{Rotations Sampling}\label{sec:rotations}
Sampling the rotations, defined so far continuously in the S-LRI and G-LRI approaches, is necessary to compute the invariant responses as in \eqref{eq:Gg} and \eqref{eq:Gs}.  
We sample rotations $\mathrm{R}\in B \subset SO(3)$, where $B$ is a finite subset of sampled rotations. To this end, we uniformly sample orientations as points on the sphere using a triangulation method that iteratively splits octahedron faces to obtain the Euler angles $(\alpha,\beta)$ around $z$ and $y'$ respectively. We then sample the last angle $\gamma$ around $z''$ uniformly between $0$ and $2\pi$. The octahedral group $O$, for instance, is obtained by sampling 6 points on the sphere (i.e. six $(\alpha,\beta)$ pairs) and four values of $\gamma$ to obtain the 24 right-angle rotations. We denote by $M=|B|$ the number of tested rotations. 

In this paper, we evaluate the following sets of rotations: no rotation ($M=1$), Klein's four rotations ($M=4$), octahedral group of rotations ($M=24$) and 72 rotations ($M=72$ with 18 points on the sphere and 4 values of $\gamma$). 
For the G-LRI, we restrict the evaluation to the octahedral group as implemented in \cite{winkels2019pulmonary} as evaluating more rotations becomes computationally too expensive and requires interpolation for non right-angle rotations.
It is worth noting that for both G-LRI and S-LRI designs, LRI being obtained by max-pooling over the $M$ orientation channels after the first convolution, this rotation sampling results in an approximated invariance. Finally, for a discrete G-CNN, it is required that $B=G$ is a finite subgroup of $SO(3)$, which is not needed in our case, since we do not propagate the equivariance to the next layer.

\subsubsection{Naive Filter Discretization}\label{sec:naive_filter}
In the G-LRI (\ref{sec:g-lri}), the filters $f$ are simply voxelized to 3D kernels of $c^3$ voxels as in a standard 3D CNN or G-CNN architecture. All the voxels are trainable parameters that are shared across rotations.

\subsubsection{Radial Profiles}\label{sec:radial_profiles}
In both the S-LRI and SSE-LRI methods, the radial profiles $h_{i,n}$ (and hence the filters $f_i$) have a compact spherical support $G= \{\bm{x}\in \mathbb{R}^3, \lVert \bm{x} \rVert \leq \rho_0\}$, where $\rho_0>0$ is fixed.
For any $i$ and $n$, we consider the voxelized version of the radial profile $h_{i,n}(\rho)$.
The size of the support of the voxelized version is related to the maximum radius $\rho_0$ of the filter in the continuous domain and the level of voxelization.
Due to the isotropic constraint, for a support of $c^3$ voxels, the number of trainable parameters for each $h_{i,n}$ is $\Bigl\lceil \frac{(c-1)}{2} \times\sqrt{3} \Bigr\rceil +1$. 
The values of the filter $f_i(\rho,\theta,\phi)$ over the continuum is deduced from the discretization of the voxelized radial profile on the 3D discrete grid using linear interpolation\footnote{Note that this discretization is not truly isotropic due to the corner effect as the last weights of the radial profile $h_{i,n}$ only affect the corners of the interpolated cubes. While this could be avoided by reducing the length of the radial profile, we favor this implementation for the following reasons.
With right-angle rotations, cubic filters are optimal and do not deteriorate the already approximated rotation invariance.
Besides, the isotropy can easily be learned by forcing the corner weights to zero for finer rotation samplings.}.


%
%
The maximal degree $N$ cannot be taken arbitrarily large once the radial profiles are voxelized. Indeed, the discretized filters $f_i$ are defined over $c^3$ voxels, which imposes the restriction that $N \leq \pi c / 4$, which can be interpreted as the spherical Nyquist frequency. 

%
\begin{figure}[h!]
	\centering
	\includegraphics[width=0.25\textwidth]{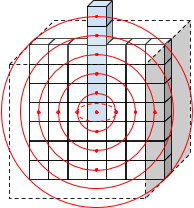}
	\caption{Illustration of a 2D slice of the isotropic radial profiles $h_{i,n}$ with $c=7$. The blue voxels represent the trainable parameters. The rest of the cube is linearly interpolated.}
	\label{fig:rsupport}
\end{figure}

\subsection{Datasets}\label{sec:datasets}
We evaluate the proposed method with two experiments described in the following.

In the first experiment is a sanity check to ensure the relevance of the LRI property. We built a dataset for texture classification containing two classes with 500 synthetic volumes each. The volumes of size $32\times32\times32$ are generated by placing two $7\times7\times7$ patterns, namely a binary segment and a 2D cross with the same norm, at random 3D orientations and random locations with overlap. The number of patterns per volume is randomly set to $\lfloor d(\frac{s_v}{s_p})^3\rfloor$, where $s_v$ and $s_p$ are the sizes of the volume and of the pattern respectively and the density $d$ is drawn from a uniform distribution in the range $[0.1,0.5]$. The two texture classes vary by the proportion of the patterns, i.e. 30\% segments with 70\% crosses for the first class and vice versa for the second class. 800 volumes are used for training and the remaining 200 for testing.
Despite the simplicity of this dataset, some variability is introduced by the overlapping patterns and the linear interpolation of the 3D rotations, making it challenging and more realistic.
A 2D schematic illustration of the 3D synthetic textures is shown in Fig. \ref{fig:synthetic}.
\begin{figure}[h!]
\centering
\subfigure[class 1]{\includegraphics[width=0.15\linewidth]{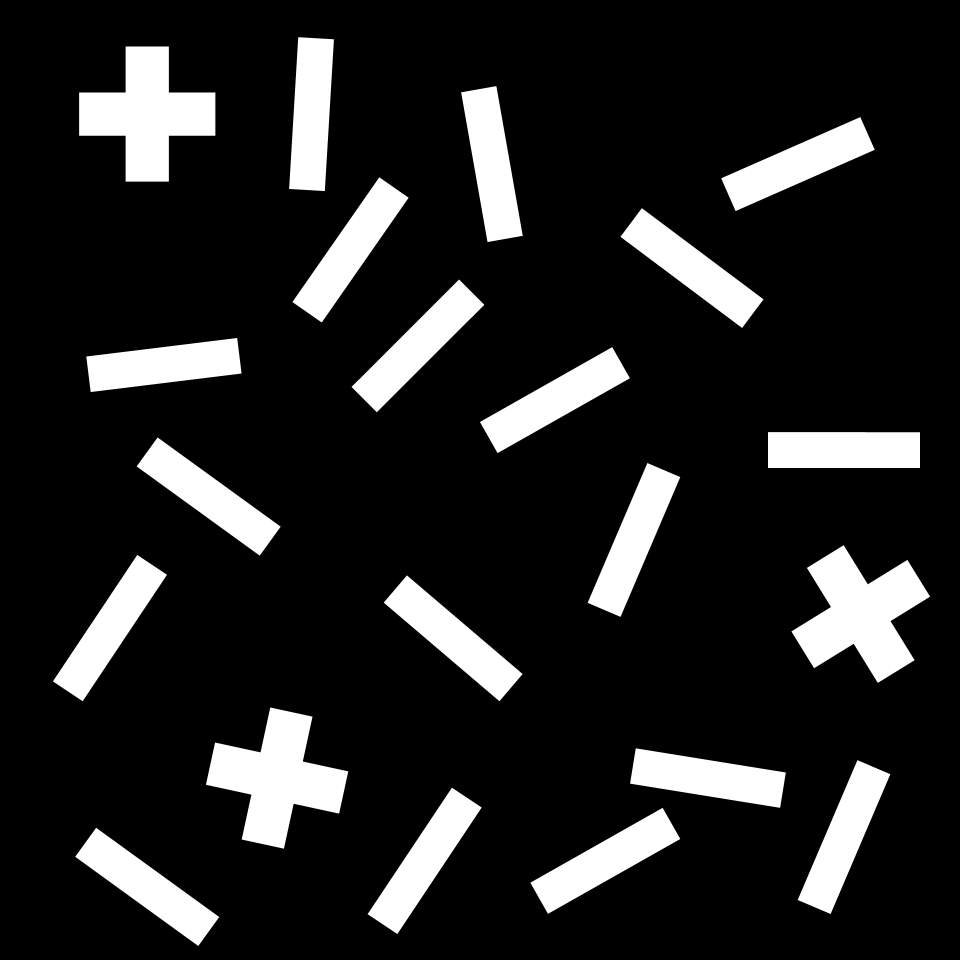}}
\hspace{1cm}
\subfigure[class 2]{\includegraphics[width=0.15\linewidth]{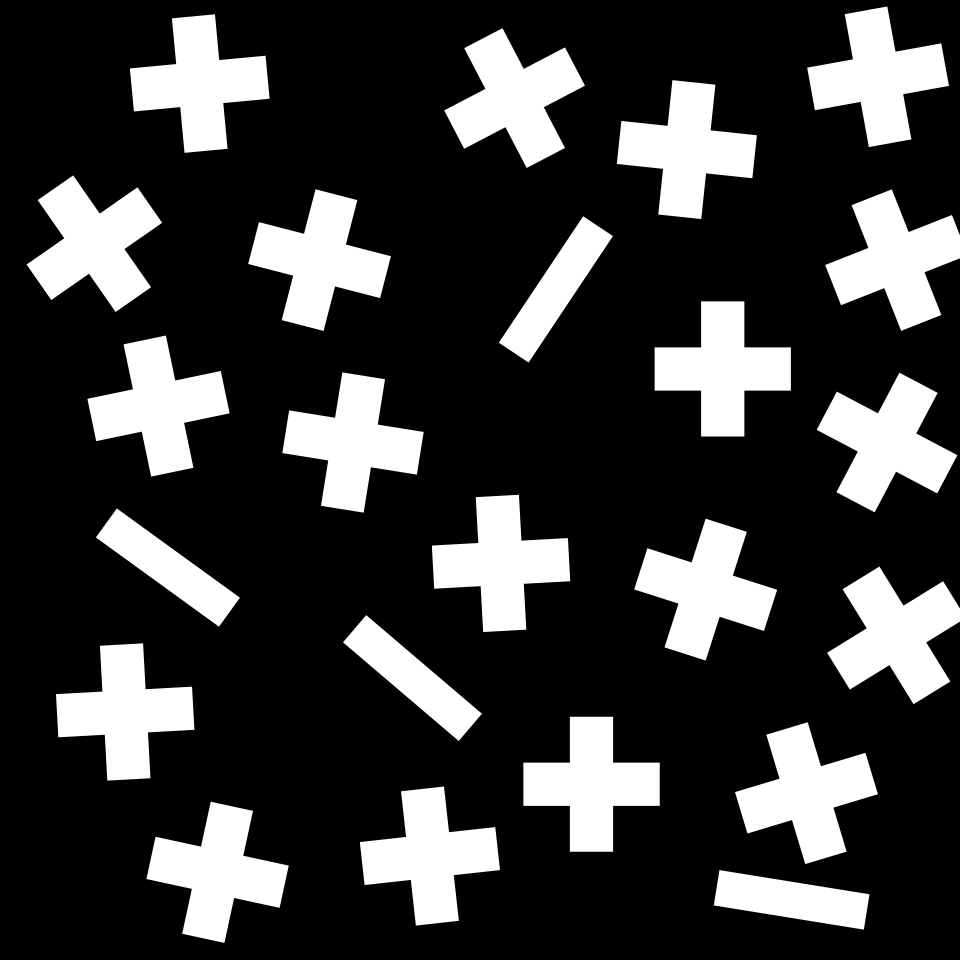}}
\caption{2D schematic illustrations of the 3D synthetic textures.}
\label{fig:synthetic}
\end{figure}

The second dataset is a subset of the American National Lung Screening Trial (NLST) that was annotated by radiologists at the University Hospitals of Geneva (HUG) \cite{MHB2019}. The dataset includes 485 pulmonary nodules from distinct patients in CT, among which 244 were labeled benign and 241 malignant.
We zero-pad or crop the input volumes (originally ranging from $16\times16\times16$ to $128\times128\times128$) to the size $64\times64\times64$. 
We use balanced training and test splits with 392 and 93 volumes respectively.
Examples of 2D slices of the lung nodules are illustrated in Fig. \ref{fig:nlst}.
The Hounsfield units of the training and test volumes are clipped in the range $[-1000,400]$, then normalized with zero mean and unit variance (using the mean and variance of cropped training volumes).

\begin{figure}[h!]
\centering
\subfigure[Benign nodule]{\includegraphics[width=0.3\linewidth]{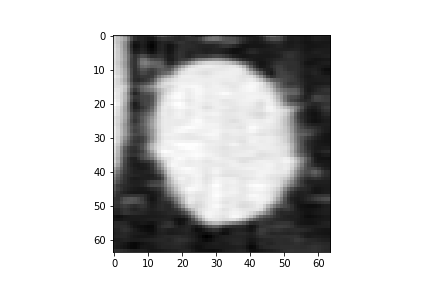}}
\subfigure[Malignant nodule]{\includegraphics[width=0.3\linewidth]{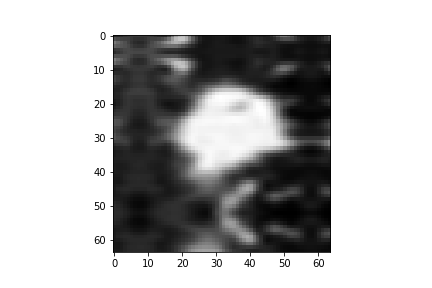}}
\caption{2D slices from 3D volumes of benign and malignant pulmonary nodules.}
\label{fig:nlst}
\end{figure}

\subsection{Network Architecture}\label{sec:architecture}
The architecture details of the CNNs are provided in this section\footnote{Code available on GitHub \url{github.com/v-andrearczyk/lri-cnn}.}. 
The first layer of the LRI networks consists of one of the LRI layers (G-LRI, S-LRI or SSE-LRI). The responses are aggregated using spatial GAP after the first layer, similarly to \cite{AnW2016} as described in Section \ref{sec:equiv_local_oper}. This pooling aggregates the LRI operator responses into a single scalar per feature map and is followed by Fully Connected (FC) layers. For the nodule classification experiment,  we average the responses inside the nodule masks instead of across the entire feature maps. 
This operation is performed to focus on the texture inside the nodule. The receptive fields are small and the sizes of the nodules vary largely across cases, making a standard GAP less appropriate for the proposed study.
For the synthetic experiment, we connect directly the final softmax FC layer with a cross-entropy loss. For the nodule classification, we use an intermediate fully connected layer with 128 neurons before the same final layer.
Standard Rectified Linear Units (ReLU) activations are employed. The networks are trained using Adam optimizer with $\beta_1=0.99$ and $\beta_2=0.9999$ and a batch size of 8.
Other task-specific parameters are: for the synthetic experiment (kernel size $7\times7\times7$, stride 1, 2 filters and 50,000 iterations), for the nodule classification experiment (kernel size $9\times9\times9$, stride 2, 4 filters and 10,000 iterations).
The number of iterations was fixed to these values as the networks reach a plateau beyond these values.

We compare the proposed architectures to a network with the same architecture but with a standard 3D convolutional layer, referred to as Z3-CNN.

\subsection{Weights Initialization}\label{sec:initialization}
The SHs are normalized to $\left\Vert Y_{n,m} \right\Vert_2 = 1 $. The coefficients are then randomly initialized by a normal distribution with $\mathrm{C}_{i,n}[m] \sim \mathcal{N}(0,\,\sigma^{2})$, with  $\sigma^2=\frac{2}{n_{in}(N+1)^2}$ and $n_{in}$
is the number of input channels (generally 1), the radial profiles are initialized to $h_{i,n}(\rho) \sim \mathcal{N}(0,\,1)$ and the biases to zero. This initialization is inspired by ~\cite{he2015delving,weiler2017learning} in order to avoid vanishing and exploding activations and gradients.
\section{Experimental Results}
\label{sec:exp}

In this section, we experimentally evaluate and compare standard CNNs (Z3-CNN with or without rotational data augmentation), the three proposed approaches to achieve LRI image analysis (i.e. G-LRI, S-LRI and SSE-LRI) as well as global RI with G-RI and S-RI.
The two datasets and tasks described in Section~\ref{sec:datasets} are used. 
The approximation capability of the SH-based parametric representation is first evaluated in Section~\ref{sec:SHapproxCapability}.
Z3 and LRI approaches are then compared in Section~\ref{sec:Z3vsLRI}. The importance of LRI when compared to global RI is investigated in Section~\ref{sec:localVSglobalRI}.
Finally, the complexity of networks is compared in terms of computational time and number of trainable parameters in Section~\ref{sec:networkComplexity}. The results will be discussed in Section~\ref{sec:discussions}.

\subsection{SH Parametric Approximation Capability}\label{sec:SHapproxCapability}
Fig. \ref{fig:M1_s} compares standard 3D kernels (Z3-CNN) with the SH parametric representation (S-LRI with $M=1$ tested orientation) using either polar separable (i.e. $h$) or non-polar separable (i.e. $h_n$) implementations. 
Based on a trade-off between complexity and performance, we select the maximum degree $N=3$ in the following experiments. 
We set $N=2$ for the lung nodule dataset with a similar analysis.
 
\begin{figure}[htbp]
\centering
\begin{tikzpicture}[]
\begin{axis}[legend pos=outer north east, xlabel={}, ylabel={acc.},width=9cm,height=4cm]
\addplot [dashed, no markers, thick, red]
      table[x=x,y=y, meta=label] {
    x       y       label
    0       78.8    b
    7       78.8    b
};
\addplot[mark=*] 
plot [error bars/.cd, y dir = both, y explicit]
      table[x=x,y=y, y error=ey, meta=label] {
    x       y       ey  label   nparam
    0       64.0    4.56   a       24
    1       65.0    1.41  a       30
    2       74.8    2.38   a       40
    3       74.8    3.03   a       54
    4       74.5    2.75   a       72
    5       79.5    1.83   a       94
    6       77.7    2.08   a       120
    7       79.4    1.71   a       150
};
\addplot[mark=*,green] 
plot [error bars/.cd, y dir = both, y explicit]
      table[x=x,y=y, y error=ey, meta=label] {
    x       y       ey  label   nparam
    0       64.0    4.56   a       24
    1       74.5    2.24   a       44
    2       79.4    2.09   a       68
    3       83.4    1.74   a       96
    4       85.5    2.00   a       128
    5       84.2    1.40   a       164
    6       85.4    1.87   a       204
    7       88.1    0.63   a       248
};
\legend{Z3 $694$ param.,S-LRI-$h$,S-LRI-$h_n$,random 50\%}
\end{axis}
\node[] at (0.55,-0.7) {{\footnotesize{}$24$/{\color{green}$24$}}};
\node[] at (1.4,-0.7) { {\footnotesize{}$30$/{\color{green}$44$}}};
\node[] at (2.3,-0.7) {  {\footnotesize\textbf{}{}$40$/{\color{green}$68$}}};
\node[] at (3.1,-0.7) {  {\footnotesize{}$54$/{\color{green}$96$}}};
\node[] at (4.0,-0.7) {  {\footnotesize{}$72$/{\color{green}$128$}}};
\node[] at (5,-0.7) {  {\footnotesize{}$94$/{\color{green}$164$}}};
\node[] at (6.0,-0.7) {  {\footnotesize{}$120$/{\color{green}$204$}}};
\node[] at (7.1,-0.7) {  {\footnotesize{}$150$/{\color{green}$248$}}};
\node[] at (8.5,-0.22) {$N$};
\node[] at (8.6,-0.67) {$\#$ param.};
\end{tikzpicture}
\caption{Comparison of standard 3D kernels (Z3) and the SH parametric representation (S-LRI) with varying maximum degree $N$ using a single orientation $M=1$ (i.e. not using the steering capacity) on the synthetic 3D texture dataset. The polar separable and non-polar separable versions are respectively denoted S-LRI-$h$ and S-LRI-$h_n$. 
The average accuracy (random 50\%) and standard error (10 repetitions) are reported as well as the numbers of parameters.}
\label{fig:M1_s}
\end{figure}
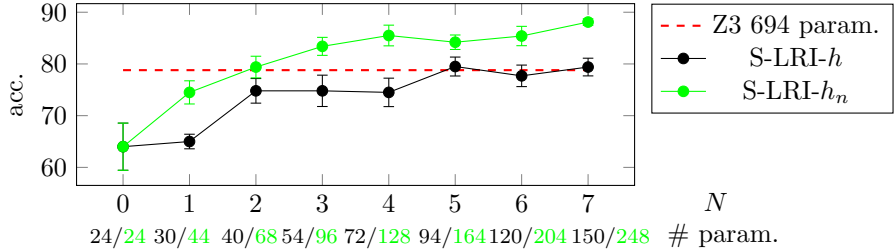

In Fig.~\ref{fig:M1_sse}, we report the performance of the SSE-LRI with varying maximal degree $N$ on the synthetic dataset. These results do not aim at evaluating the parametric representation 
but rather to evaluate the influence of $N$ for the SSE-LRI and in order to choose a value (also $N=3$ and $N=2$ for the two datasets respectively) for further comparisons in the following experiments. 

\begin{figure}
\centering
\begin{tikzpicture}[]
\begin{axis}[legend pos=outer north east, xlabel={}, ylabel={acc.},width=9cm,height=4cm]
\addplot [dashed, no markers, thick, red]
      table[x=x,y=y, meta=label] {
    x       y       label
    0       78.8    a
    6       78.8    a
};
\addplot[mark=*]
plot [error bars/.cd, y dir = both, y explicit]
      table[x=x,y=y, y error=ey, meta=label] {
    x       y       ey  label  
    0       72.9    3.8   b      
    1       84.8    1.82   b      
    2       88.5    0.71   b      
    3       90.0    0.5   b      
    4       90.6    0.25   b      
    5       91.2    0.19   b      
    6       91.7    0.34   b       
};
\addplot[mark=*, green]
plot [error bars/.cd, y dir = both, y explicit]
      table[x=x,y=y, y error=ey, meta=label] {
    x       y       ey  label  
    0       72.9    3.8   c        
    1       88.4    1.71   c      
    2       91.0    0.56   c      
    3       91.0    0.25   c      
    4       91.7    0.19   c      
    5       92.3    0.3   c      
    6       91.9    0.16   c       
};
\legend{Z3 694 param., SSE-LRI-$h$,SSE-LRI-$h_n$}
\end{axis}
\node[] at (0.55,-0.7) {{\footnotesize{}$22$/{\color{green}$22$}}};
\node[] at (1.6,-0.7) { {\footnotesize{}$28$/{\color{green}$42$}}};
\node[] at (2.6,-0.7) {  {\footnotesize\textbf{}{}$34$/{\color{green}$62$}}};
\node[] at (3.65,-0.7) {  {\footnotesize{}$40$/{\color{green}$82$}}};
\node[] at (4.7,-0.7) {  {\footnotesize{}$46$/{\color{green}$102$}}};
\node[] at (5.75,-0.7) {  {\footnotesize{}$52$/{\color{green}$122$}}};
\node[] at (6.9,-0.7) {  {\footnotesize{}$58$/{\color{green}$142$}}};
\node[] at (8.5,-0.22) {$N$};
\node[] at (8.6,-0.67) {$\#$ param.};
\end{tikzpicture}
\caption{Average accuracy (random 50\%) and standard error (10 repetitions) on the synthetic dataset for the SSE-LRI with varying values of $N$ and comparison with the standard 3D-CNN (Z3).}
\label{fig:M1_sse}
\end{figure}
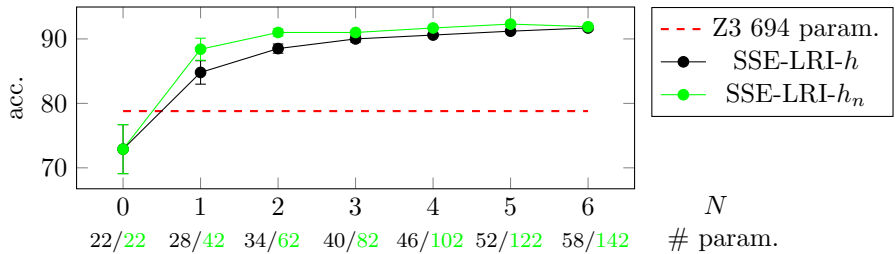

\subsection{Comparing Standard and LRI Architectures}\label{sec:Z3vsLRI}

The influence of the number of tested orientations $M$ is investigated in  Fig.~\ref{fig:results_m} using the S-LRI for both the synthetic and lung nodule datasets.
These results illustrate the benefit of a fine rotation sampling, as well as the better performance of the non-polar separable kernels. 
\begin{figure}
\begin{center}
\begin{minipage}{.49\textwidth}
\begin{tikzpicture}
\hspace{-0.3cm}
\begin{axis}[
    ybar,
    ylabel={acc.},
    enlargelimits=0.15,
    legend style={at={(0.5,-0.15)},
      anchor=north,legend columns=-1},
    symbolic x coords={$M$=1,$M$=4,$M$=24,$M$=72},
    xtick=data,
    ]
\addplot+[nodes near coords,error bars/.cd, y dir=both,y explicit] coordinates {($M$=1,74.8)+-(1.0,1.0)  ($M$=4,85.0)+-(0.5,0.5) ($M$=24,91.3)+-(0.4,0.4) ($M$=72,92.0)+-(0.2,0.2)};
\addplot+[nodes near coords,error bars/.cd, y dir=both,y explicit] coordinates {($M$=1,83.4)+-(0.5,0.5) ($M$=4,89.6)+-(0.6,0.6) ($M$=24,93.5)+-(0.1,0.1) ($M$=72,94.2)+-(0.1,0.1)};
\addplot[black,dashed,mark=none, sharp plot, line width=0.6pt]
coordinates {($M$=1,78.8) ($M$=72,78.8)};
\legend{S-LRI-$h$,S-LRI-$h_n$}
\end{axis}
\end{tikzpicture}
\end{minipage}
\begin{minipage}{.49\textwidth}
\begin{tikzpicture}
\begin{axis}[
    ybar,
    enlargelimits=0.15,
    legend style={at={(0.5,-0.15)},
      anchor=north,legend columns=-1},
    symbolic x coords={$M$=1,$M$=4,$M$=24,$M$=72},
    xtick=data,
    ]
\addplot+[nodes near coords,error bars/.cd, y dir=both,y explicit] coordinates {($M$=1,76.3)+-(0.38,0.38)  ($M$=4,79.0)+-(0.3,0.3)  ($M$=24,81.9)+-(0.33,0.33)  ($M$=72,80.7)+-(0.74,0.74)};
\addplot+[nodes near coords,error bars/.cd, y dir=both,y explicit] coordinates {($M$=1,80.5)+-(0.3,0.3)  ($M$=4,81)+-(0.3,0.3)  ($M$=24,82.8)+-(0.41,0.41)  ($M$=72,84.2)+-(0.34,0.34)};
\addplot[black,dashed,mark=none, sharp plot, line width=0.6pt]
coordinates {($M$=1,80.0) ($M$=72,80.0)};
\legend{S-LRI-$h$,S-LRI-$h_n$}
\end{axis}
\end{tikzpicture}
\end{minipage}
\end{center}
\caption{Average accuracy (\%) and standard error (10 repetitions) of the S-LRI with maximum degree $N=3$ and $N=2$ for the synthetic and NLST datasets respectively} and varying numbers of orientations $M$. Left: synthetic dataset; right: lung nodule dataset.
Black dashed lines represent the accuracy of the Z3-CNN.
\label{fig:results_m}
\end{figure}
We now use the best reported values of $M$ and compare all proposed LRI approaches to the standard Z3-CNN.
The results are summarized in Tables~\ref{tab:res_all_synthetic} and~\ref{tab:res_all_nlst} for the synthetic experiment and the lung classification experiment, respectively.

\begin{table}[h]
\centering
\begin{tabular}{lccccc}
  \bfseries model & \textbf{\textit{N}} & \textbf{\textit{M}} & \bfseries \# filters & \bfseries \# param. & \bfseries accuracy\tiny{$\pm{\sigma}$}  \\
  \hline  
  Z3            & - & - & 2 & 694 & 78.8\tiny{$\pm{7.1}$} \\ 
  Z3            & - & - & 144 & 49,826 & 94.0\tiny{$\pm{0.7}$} \\ 
  
  G-LRI         & - & 24 & 2 & 694 & 89.0\tiny{$\pm{5.1}$} \\ 
  
  SSE-LRI-$h$   & 3 & - & 2 & 40 & 90.1\tiny{$\pm{1.5}$} \\ 
  
  SSE-LRI-$h_n$ & 3 & - & 2 & 82 & 91.0\tiny{$\pm{0.8}$} \\ 
  
S-LRI-$h$    & 3 & 24 & 2 & 54 & 91.3\tiny{$\pm{4.4}$} \\ 
  S-LRI-$h$     & 3  & 72 & 2 & 54 & 92.0\tiny{$\pm{1.9}$} \\ 
  
  S-LRI-$h_n$   & 3 & 24 & 2 & 96 & 93.5\tiny{$\pm{0.8}$} \\ 
  S-LRI-$h_n$   & 3 & 72 & 2 & 96 & \textbf{94.2\tiny{$\pm{1.1}$}} \\ 
  \end{tabular}
\caption{Average accuracy (\%) and standard deviation on the synthetic 3D local rotation dataset of all LRI approaches and comparison with a standard CNN (Z3).
}
\label{tab:res_all_synthetic}
\end{table}

  
  
  
  
  
  

\begin{table}[h]
\centering
\begin{tabular}{lccccc}
  \bfseries model & \textbf{\textit{N}} & \textbf{\textit{M}} & \bfseries \# filters & \bfseries \# param. & \bfseries accuracy\tiny{$\pm{\sigma}$}  \\
  \hline  
  Z3            & - & - & 4 & 3,818 & 80.0\tiny{$\pm{1.7}$} \\ 
  Z3            & - & - & 96 & 82,754 & 81.3\tiny{$\pm{2.2}$} \\ 
  
  G-LRI         & - & 24 & 4 & 3,818 & \textbf{87.7\tiny{$\pm{2.2}$}} \\ 
  
  SSE-LRI-$h$   & 2 & - & 4 & 1,966 & 81.1\tiny{$\pm{2.2}$} \\ 
  
  SSE-LRI-$h_n$ & 2 & - & 4 & 2,030 & 81.3\tiny{$\pm{2.6}$} \\ 
  
  S-LRI-$h$     & 2 & 24 & 4 & 970 & 81.9\tiny{$\pm{3.3}$} \\ 
  
  S-LRI-$h_n$   & 2 & 72 & 4 & 1,034 & 84.2\tiny{$\pm{3.4}$} \\ 
  \end{tabular}
\caption{Average accuracy (\%) and standard deviation on the pulmonary nodule classification of all LRI approaches and comparison with a standard CNN (Z3). 
}
\label{tab:res_all_nlst}
\end{table}

\subsection{Comparing Local and Global RI}\label{sec:localVSglobalRI}
The importance of LRI is investigated in this section by comparing LRI, global Rotation Invariance (RI) and rotational data augmentation. The latter consists in randomly rotating the volumes by right-angle rotations during training.
Corresponding results are reported in Tables~\ref{tab:ri_lri_synthetic} and~\ref{tab:ri_lri_nlst} for the synthetic and lung nodule datasets, respectively.

\begin{table}[h]
\centering
\begin{tabular}{lccccc}
  \bfseries model & \textbf{\textit{N}} & \textbf{\textit{M}} & \bfseries \# filters & \bfseries \# param. & \bfseries accuracy\tiny{$\pm{\sigma}$}  \\
  \hline  
  Z3            & - & - & 2 & 694 & 78.8\tiny{$\pm{7.1}$} \\ 
  Z3 augm.      & - & - & 2 & 694 & 84.0\tiny{$\pm{5.2}$} \\ 
  G-RI          & - & 24 & 2 & 694 & 79.0\tiny{$\pm{5.6}$} \\ 
  G-LRI         & - & 24 & 2 & 694 & 89.0\tiny{$\pm{5.1}$} \\ 
  S-RI-$h_n$    & 3 & 72 & 2 & 94 & 85.9\tiny{$\pm{5.5}$} \\ 
  S-LRI-$h_n$   & 3 & 72 & 2 & 96 & \textbf{94.2\tiny{$\pm{1.1}$}} \\ 
  \end{tabular}
\caption{Average accuracy (\%) and standard deviation on the synthetic 3D local rotation dataset comparing LRI, RI and data augmentation (random right-angle rotations).
}
\label{tab:ri_lri_synthetic}
\end{table}


\begin{table}[h]
\centering
\begin{tabular}{lccccc}
  \bfseries model & \textbf{\textit{N}} & \textbf{\textit{M}} & \bfseries \# filters & \bfseries \# param. & \bfseries accuracy\tiny{$\pm{\sigma}$}  \\
  \hline  
  Z3            & - & - & 4 & 3,818 & 80.0\tiny{$\pm{1.7}$} \\ 
  Z3 augm.      & - & - & 4 & 3,818 & 82.2\tiny{$\pm{3.3}$} \\ 
  G-RI          & - & 24 & 4 & 3,818 & 81.8\tiny{$\pm{2.1}$} \\ 
  G-LRI         & - & 24 & 4 & 3,818 & \textbf{87.7\tiny{$\pm{2.2}$}} \\ 
  S-RI-$h_n$    & 2 & 72 & 4 & 1,030 & 81.8\tiny{$\pm{3.1}$} \\ 
  S-LRI-$h_n$   & 2 & 72 & 4 & 1,034 & 84.2\tiny{$\pm{3.4}$} \\ 
  \end{tabular}
\caption{Average accuracy (\%) and standard deviation on the lung nodule dataset comparing LRI, RI and data augmentation (random right-angle rotations).
}
\label{tab:ri_lri_nlst}
\end{table}

\subsection{Networks Complexity: Computational Time and Trainable Parameters}\label{sec:networkComplexity}
The number of trainable parameters and the computational time are reported in Table~\ref{tab:computation_comparison}, where the Z3, S-LRI and SSE-LRI are compared. Their polar versus non-polar separable versions are also detailed.
The calculation of the number of trainable parameters is provided in~\ref{app:number_params}.

\begin{table}[h]
\centering
\resizebox{\textwidth}{!}{
\begin{tabular}{c|c|ccc|ccc|ccc|ccc}
  model & Z3 & \multicolumn{3}{c}{S-LRI-$h$} & \multicolumn{3}{|c}{S-LRI-$h_n$} & \multicolumn{3}{|c}{SSE-LRI-$h$} & \multicolumn{3}{|c}{SSE-LRI-$h_n$} \\
  N & - & 0 & 3 & 6 & 0 & 3 & 6 & 0 & 3 & 6 & 0 & 3 & 6\\ 
  \hline  
  \# param. & 694   & 24 & 54  & 120   & 24    & 96    & 204    & 22    & 40    & 58    & 22  & 82    & 142 \\ 
  time      & 28    & 56 & 95  & 196    & 56    & 97    & 211    & 30    & 64    & 146  & 30  & 64    & 146
  \end{tabular}
  }
\caption{Computational and parameters comparison on the synthetic dataset with the setups of Table~\ref{tab:res_all_synthetic} (some extra setups are included for comparison). The computational time is measured in seconds for 1,000 iterations trained on a Tesla K80 GPU.
  }
\label{tab:computation_comparison}
\end{table}


\section{Discussions}
\label{sec:discussions}
We discuss and interpret the results detailed in Section~\ref{sec:exp} in terms of the general importance of LRI image analysis (Section~\ref{sec:importanceLRI}), optimal LRI design (Section~\ref{sec:optimalLRI}), as well as kernel compression (reduction of trainable parameters) and interpretability (Section~\ref{sec:compression}).
\subsection{Importance of LRI}\label{sec:importanceLRI}
The results reported in the previous section demonstrated the importance of LRI image analysis in the proposed experiments. 
Particularly, best results are obtained with LRI architectures (S-LRI-$h_n$ on the synthetic dataset and G-LRI for pulmonary nodule classification) as reported in Tables~\ref{tab:res_all_synthetic} and~\ref{tab:res_all_nlst}. 
Besides, LRI performs significantly better than RI in these experiments where local patterns occur at random orientations, as shown in Tables~\ref{tab:ri_lri_synthetic} and~\ref{tab:ri_lri_nlst}. 
In addition, despite improving the performance of the standard Z3-CNN, rotation data augmentation is not sufficient to obtain LRI (Table~\ref{tab:ri_lri_synthetic} and~\ref{tab:ri_lri_nlst}).
Adding more filters\footnote{The number of filters is multiplied by the number of orientations used in the LRI methods for a fair comparison.} to the Z3-CNN (second rows of Tables~\ref{tab:res_all_synthetic} and~\ref{tab:res_all_nlst}) allows learning filters at different orientations at the heavy cost of a large number of parameters and convolution operations without reaching the performance of LRI networks.

In the SSE-LRI, the number of output feature maps and trainable parameters of the LRI convolution increases with $N$ (see \eqref{eq:Gsse}). To show that the performance gain is not solely due to more feature maps and parameters but rather to a better approximation capability, we evaluate the SSE-CNN with $N=0$ and more output channels ($C=8$ instead of $C=2$). This setup relies on a number of output feature maps that is equal to the SSE-CNN with $N=3$ in Table~\ref{tab:res_all_synthetic}.
Yet, the accuracy of the former is $81.5\%\scriptscriptstyle{\pm{3.2}}$ versus $90.1\%\scriptscriptstyle{\pm{1.5}}$ for the latter. This result highlights the relevance of SSE quantities extracted at various degrees $n$ and, in turn, the importance of directional sensitivity. 
Note that the synthetic patterns do not have exactly the same zero frequency, explaining the fact that both S-LRI and SSE-LRI designs based on $N=0$, i.e. using directionally insensitive filters, can discriminate the two classes to some extent.

\subsection{Comparison of LRI Methods}\label{sec:optimalLRI}
LRI can be obtained by G-convolution, steering, or invariants computed from the SH responses, as summarized in Fig.~\ref{fig:lri_overview}.
The results of the parametric representation (Fig.~\ref{fig:M1_s}), confirmed by the following results using steerability or spherical energy (Figures~\ref{fig:M1_sse},~\ref{fig:results_m} and Tables~\ref{tab:res_all_synthetic},~\ref{tab:res_all_nlst}), show that non-polar separable (radial profile $h_n$) filters perform better than polar separable ones (radial profile $h$). 
In particular, it provides more flexibility to learn the optimal combination of different degrees in \eqref{eq:formoffilterinterm} with individual radial profiles, resulting in a better parametric approximation capability with a limited increase of parameters. To ensure that the performance gain is not only due to the increase of parameters, we incrementally increased from two to five the number of filters of the polar separable design for the synthetic dataset (number of trainable parameters ranging from 54 to 132). The highest accuracy with $M=72$ orientations is 92.4\%, outperformed by the non-polar separable design with 94.2\%.

A fine sampling of orientations ($M$=72) is beneficial to the S-LRI (see Fig.~\ref{fig:results_m}), particularly outperforming the other architectures on the synthetic dataset (Table~\ref{tab:res_all_synthetic}). The polar-separable S-LRI-$h$, however, may be too simplistic to benefit from a finer orientation sampling on the nodule dataset (Fig.~\ref{fig:results_m}, right).
The SSE-LRI offers a trade-off between performance and computation. It does not require steering the SH responses (the locally rotation equivariant representation not being required), resulting in a reduction of memory and operations requirements but at the cost of a lower kernel specificity: the solid spherical energy mixes responses of different SH patterns (for $n>0$) as well as inter-degree phases, thus discarding some potentially valuable discriminatory information for later layers.

On the NLST dataset, the G-LRI performs better than the S-LRI. The local patterns in the lung nodule dataset may be easier to represent in a non-parametric form as compared to the synthetic patterns.

\subsection{Compression and Interpretability}\label{sec:compression}
The number of trainable parameters is largely reduced when using S-LRI and SSE-LRI as compared to a standard Z3-CNN. 
We identify two distinct factors enabling parameter reduction, namely the \textit{weight sharing} across rotations and the \textit{parametric representation}. 
The \textit{weight sharing} is present, among others, in the G-LRI where the same kernels account for multiple orientations. Similarly, the S-LRI and SSE-LRI architectures share trainable radial profiles and harmonic coefficients to obtain responses to rotated kernels.
The \textit{parametric representation}, on the other hand, is used in~\cite{andrearczyk2019exploring,weiler2017learning,WGT2016} by learning a combination of basis filters instead of every voxel of the kernels. A steerable basis was also used in \cite{MUD2020} for a fast rotational sparse coding. 
Fig.~\ref{fig:M1_s} shows the performance of the parametric representation (with a single orientation of the S-LRI), even outperforming the standard Z3-CNN that contains many more parameters.  
The proposed SSE-LRI approach goes one step further as there is no need to learn an explicit full parametric representation of the kernels. For computing the SSE-LRI invariants, we only calculate the norm of the SHs responses, resulting in a large reduction of the number of trainable parameters.

One bottleneck with both G-CNN~(\cite{winkels2019pulmonary}) and S-LRI~(\cite{andrearczyk2019exploring}) is the GPU memory usage to store $M$ 3D response maps (i.e. all orientations of the equivariant representation) before orientation pooling.
This memory consumption is drastically reduced in the SSE-LRI by computing invariants on the SHs responses rather than calculating responses at all orientations. Note that the S-LRI and SSE-LRI implementations only use existing TensorFlow functions and the computation time could be further improved by efficient parallelization and CUDA programming.

Finally, in terms of network interpretability, the hard-coded equivariance and invariance enforces a geometric structure of the hidden features improving the transparency and decomposability of the network where transformations (translations and rotations) in the inputs result in predictable transformations in the activations (see~\cite{lipton2018mythos,WGT2016, worrall2018cubenet,cheng2018rotdcf} for discussions on the matter).

\section{Conclusion}
This paper explored the use of LRI in the context of 3D texture analysis. Three architectures were proposed and compared with standard 3D-CNN, global RI and data augmentation.
The results showed the importance of LRI in medical imaging and, more generally, in texture analysis where repeated patterns occur at various locations and orientations. In particular, we showed that data augmentation, commonly used in CNN training, is not sufficient to learn such an invariance and specific architectures with built-in invariance are beneficial.

In future work, we plan to explore deeper networks, building deep architectures on top of LRI layers.

\section{Acknowledgments}
This work partially supported by the Swiss National Science Foundation (grant 205320\_179069), the Swiss Personalized Health Network (IMAGINE and QA4IQI projects), as well as a hardware grant from NVIDIA.
\section{Declarations of interest}
Declarations of interest: none.
\section*{References}
\bibliography{main}

\appendix

\section{Proof of Proposition \ref{prop:linearRI}}
\label{app:Glinear}

First of all, any convolution operator $\mathcal{G}$ is equivariant to translations by construction. We therefore focus on the equivariance to rotations. 

Let $f$ and $g$ be two functions and $\mathrm{R}_0 \in SO(3)$. A simple change of variable implies that
$   \left(  f * g(\mathrm{R}_0^{-1} \cdot ) \right)
   = \left( f ( \mathrm{R}_0 \cdot ) * g\right) ( \mathrm{R}_0^{-1} \cdot). $
   Exploiting this relation, we deduce that, for any image $I \in L_2(\mathbb{R}^3)$ and any rotation $\mathrm{R}_0 \in SO(3)$, 
   \begin{equation} \label{eq:truc}
        \left(  h (\mathrm{R}_0^{-1} \cdot ) * I \right) 
        = 
        \left(  h  * I (\mathrm{R}_0 \cdot )  \right) ( \mathrm{R}_0^{-1} )
        = \mathcal{G} \{  I (\mathrm{R}_0 \cdot ) \}( \mathrm{R}_0^{-1} ). 
   \end{equation}
Assume that $\mathcal{G}$ is equivariant to rotations, then $\mathcal{G} \{  I (\mathrm{R}_0 \cdot ) \}( \mathrm{R}_0^{-1} ) = \mathcal{G}\{ I \} = h*I$. Therefore,  $\left(  h (\mathrm{R}_0^{-1} \cdot ) * I \right)  = h*I$  for any $I$, which implies the equality $ h (\mathrm{R}_0^{-1} \cdot ) = h$ for any rotation and $h$ is isotropic.
Reciprocally, if $h$ is isotropic, \eqref{eq:truc} shows that $\mathcal{G} \{  I (\mathrm{R}_0 \cdot ) \}( \mathrm{R}_0^{-1} ) = \mathcal{G}\{ I \}$ for any $I$ and any rotation $\mathrm{R}_0$, which is equivalent to the rotation equivariance of $\mathcal{G}$.

Moreover, the values of $\mathcal{G}\{ I \}(\bm{x})$ depend on local image values $I(\bm{y})$ for $\bm{y}-\bm{x}$ in the support of $h$. Hence, $\mathcal{G}$ is local if and only if the support of $h$ is compact.

Finally, $\mathcal{G}$ is LRI if and only if $h$ is isotropic (for the global equivariance to rotations) and compactly supported (for the locality).

\section{Spherical Harmonics} \label{app:SH}

The family of SHs is denoted by $(Y_{n,m})_{n \geq 0, m \in \{-n , \ldots , n\}}$, where $n$ is called the degree and $m$ the order of $Y_{n,m}$. SHs form an orthonormal basis for square-integrable functions in the $2D$-sphere $\mathbb{S}^2$. They are defined as~(\cite{DH1994})
\begin{equation}
	Y_{n,m}(\theta,\phi) = A_{n,m} P_{n,\lvert m \rvert} (\cos( \theta) ) \mathrm{e}^{\mathrm{j} m \phi},
\end{equation}
with $A_{n,m}=(-1)^{(m + \lvert m \rvert )/2}
\left(\frac{2n+1}{4\pi} \frac{(n-\lvert m \rvert)!}{(n+ \lvert m \rvert)!} \right)^{1/2}$ a normalization constant 
and $P_{n,\lvert m \rvert}$ the associated Legendre polynomial given for $0\leq m \leq n$ by 
\begin{equation}
P_{n,m}(x) := \frac{(-1)^m}{2^n n!} (1-x^2)^{m/2} \frac{\mathrm{d}^{n+m}}{\mathrm{d}x^{n+m}} (x^2-1)^{n}.
\end{equation}
We refer to \cite{AS1964} for more details.

\section{Real Steerable Filters}\label{sec:proof1}

\begin{proposition}
A function $f$ of the form \eqref{eq:formoffilterinterm} is real if and only if we have, for every $0 \leq n \leq N$, $-n\leq m \leq n$, 
\begin{equation} \label{eq:conditionrealhnm}
  \forall \rho \geq 0, \quad  C_n[- m] h_{n} (\rho) = (-1)^m \overline{C_n[m]} \overline{h_{n}(\rho)} .
\end{equation}
If we impose moreover that the radial profiles $h_n$ are real, then this is equivalent to 
\begin{equation} \label{eq:conditionCnmreal}
\mathrm{C}_n[-m] = (-1)^m \overline{\mathrm{C}_n[m]}
\end{equation}
for every $0 \leq n \leq N$, $-n\leq m \leq n$.
\end{proposition}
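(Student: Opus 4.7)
The plan is to derive the condition by computing $\overline{f}$ explicitly in the SH basis, then invoking the uniqueness of the expansion. The key tool is the complex-conjugation symmetry of the spherical harmonics, namely
\begin{equation}
\overline{Y_{n,m}} \;=\; (-1)^m \, Y_{n,-m},
\end{equation}
which follows directly from the formula in \ref{app:SH}: the Legendre factor $P_{n,|m|}(\cos\theta)$ is real and symmetric in the sign of $m$, the phase $\mathrm{e}^{\mathrm{j}m\phi}$ flips sign under conjugation, and the ratio of normalization constants $A_{n,m}/A_{n,-m}$ computes to $(-1)^m$. I would state and justify this identity as a preliminary lemma.

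Next I would conjugate the expansion \eqref{eq:formoffilterinterm} term-by-term and substitute:
\begin{align}
\overline{f(\rho,\theta,\phi)}
&= \sum_{n=0}^{N} \overline{h_n(\rho)} \sum_{m=-n}^{n} \overline{\mathrm{C}_n[m]} \, \overline{Y_{n,m}(\theta,\phi)} \\
&= \sum_{n=0}^{N} \overline{h_n(\rho)} \sum_{m=-n}^{n} (-1)^m \overline{\mathrm{C}_n[m]} \, Y_{n,-m}(\theta,\phi).
\end{align}
I then re-index the inner sum via $m \mapsto -m$ (the range is symmetric and $(-1)^{-m}=(-1)^m$) to bring $\overline{f}$ into the same basis form as $f$:
\begin{equation}
\overline{f(\rho,\theta,\phi)}
= \sum_{n=0}^{N} \sum_{m=-n}^{n} (-1)^m \overline{\mathrm{C}_n[-m]} \, \overline{h_n(\rho)} \, Y_{n,m}(\theta,\phi).
\end{equation}

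The condition ``$f$ is real'' means $f = \overline{f}$ pointwise. For each fixed $\rho \geq 0$, both sides are functions on $\mathbb{S}^2$ living in the span of $\{Y_{n,m}\}_{n\leq N, |m|\leq n}$; since this family is linearly independent (being orthonormal), I may equate the coefficients of $Y_{n,m}$ on each side. This yields
\begin{equation}
h_n(\rho)\,\mathrm{C}_n[m] \;=\; (-1)^m \overline{\mathrm{C}_n[-m]}\,\overline{h_n(\rho)}
\end{equation}
for every $n,m,\rho$. Relabeling $m \mapsto -m$ and using $(-1)^{-m}=(-1)^m$ gives \eqref{eq:conditionrealhnm}. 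The converse is immediate by running the same computation backwards.

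For the second statement, assuming each $h_n$ is real, the conjugation bars on $h_n(\rho)$ drop, so \eqref{eq:conditionrealhnm} reduces to $h_n(\rho)\bigl(\mathrm{C}_n[-m] - (-1)^m\overline{\mathrm{C}_n[m]}\bigr)=0$ for all $\rho$. Provided $h_n\not\equiv 0$ (otherwise that radial-degree component contributes nothing to $f$ and the $\mathrm{C}_n[m]$ are unconstrained), one can pick $\rho$ with $h_n(\rho)\neq 0$ to conclude \eqref{eq:conditionCnmreal}. The only non-mechanical step is the SH conjugation identity; the rest is re-indexing and an appeal to linear independence, so the main bookkeeping hazard is keeping the sign $(-1)^m$ correct through the substitution $m \mapsto -m$.
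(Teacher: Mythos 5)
Your proof is correct and follows essentially the same route as the paper's: conjugate the expansion, apply $\overline{Y_{n,m}}=(-1)^m Y_{n,-m}$, re-index $m\mapsto -m$, and invoke linear independence of the spherical harmonics. Your explicit caveat that concluding \eqref{eq:conditionCnmreal} requires $h_n\not\equiv 0$ is a small point of extra care that the paper's proof passes over silently.
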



\begin{proof}
A filter $f$ is real if and only if 
$\overline{f(\rho,\theta,\phi)} = f (\rho,\theta,\phi)$ for every $(\rho,\theta,\phi)$. For filters given by \eqref{eq:formoffilterinterm}, this means that
\begin{equation} 
\sum_{n,m} \overline{C_n[m] h_{n }(\rho)  Y_{n,m}(\theta,\phi)} = \sum_{n,m} {C_n[m] h_{n }(\rho) Y_{n,m}(\theta,\phi)}.
\end{equation}
We use the symmetry of the spherical harmonics, $\overline{Y_{n,m}}=(-1)^m Y_{n,-m}$, on the left-hand side and change the sign of $m$ on the right-hand side to get
\begin{equation}
\sum_{n,m} \overline{C_n[m] h_{n }(\rho)} (-1)^m Y_{n,-m} (\theta,\phi) =\sum_{n,m}  C_n[-m] h_{n }(\rho) Y_{n,-m} (\theta,\phi).
\end{equation}
The $Y_{n,m}$ being linearly independent, we 
deduce that the filter is real if and only if \eqref{eq:conditionrealhnm} holds.
By imposing that the $h_n$ are real, i.e., $\overline{h_n} = h_n$, we obtain the expected criterion on the $\mathrm{C}_n[m]$ coefficients, which is \eqref{eq:conditionCnmreal}.
\end{proof}

\section{Equivariant Image Operators via Orientation Channels} \label{app:equivariance_s}

%
%
This result is reported for completeness, yet already proven in our previous publication~(\cite{andrearczyk2019exploring}).

\begin{proposition}
An image operator of the form 
\eqref{eq:Gg} and \eqref{eq:Gs}
is equivariant to translations and rotations in the sense of \eqref{eq:transinv} and  \eqref{eq:rotinv} and therefore LRI when $f$ is compactly supported.
\end{proposition}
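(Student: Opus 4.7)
The plan is to verify the three required properties in order: translation equivariance, rotation equivariance, and locality. Since $\mathcal{G}^G$ and $\mathcal{G}^S$ are defined by the same formula (differing only in the parametrization of $f$), a single argument handles both.

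First I would dispatch translation equivariance. Standard convolution $I \mapsto I*g$ is translation equivariant for any fixed filter $g$, so for each $\mathrm{R}$ fixed we have $(I(\cdot-\bm{x}_0) * f(\mathrm{R}\cdot))(\bm{x}) = (I*f(\mathrm{R}\cdot))(\bm{x}-\bm{x}_0)$. Taking the absolute value and then the supremum over $\mathrm{R}\in SO(3)$ commutes with the translation of the output argument, which gives \eqref{eq:transinv}.

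The main step is rotation equivariance. The key identity, obtained by a change of variable $\bm{y}' = \mathrm{R}_0^{-1}\bm{y}$ in the convolution integral and using that the Jacobian of a rotation is $1$, is
\begin{equation}
    (I(\mathrm{R}_0\cdot) * g)(\bm{x}) = (I * g(\mathrm{R}_0^{-1}\cdot))(\mathrm{R}_0 \bm{x}).
\end{equation}
Applying this with $g = f(\mathrm{R}\cdot)$ yields $g(\mathrm{R}_0^{-1}\cdot) = f(\mathrm{R}\mathrm{R}_0^{-1}\cdot)$. Substituting into \eqref{eq:Gg}/\eqref{eq:Gs} and taking absolute values, I get
\begin{equation}
    \mathcal{G}\{I(\mathrm{R}_0\cdot)\}(\bm{x}) = \max_{\mathrm{R}\in SO(3)} \bigl\lvert (I*f(\mathrm{R}\mathrm{R}_0^{-1}\cdot))(\mathrm{R}_0 \bm{x}) \bigr\rvert.
\end{equation}
The final ingredient is that the map $\mathrm{R}\mapsto \mathrm{R}\mathrm{R}_0^{-1}$ is a bijection of $SO(3)$, so re-indexing $\mathrm{R}' = \mathrm{R}\mathrm{R}_0^{-1}$ leaves the supremum unchanged, and the right-hand side becomes $\mathcal{G}\{I\}(\mathrm{R}_0\bm{x})$, which is exactly \eqref{eq:rotinv}. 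I expect the only subtle point here to be bookkeeping the order of composition (left vs. right action) when writing $f(\mathrm{R}\mathrm{R}_0^{-1}\cdot)$; care must be taken that this is the correct rotated filter, but it follows from the definition $f(\mathrm{R}\cdot)(\bm{y}) = f(\mathrm{R}\bm{y})$.

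Finally, I would argue locality under compact support of $f$. If $f$ is supported in the ball $\{\lVert\bm{y}\rVert\leq \rho_0\}$, then for every $\mathrm{R}\in SO(3)$ the rotated filter $f(\mathrm{R}\cdot)$ has the same support since rotations preserve the Euclidean norm. Therefore $(I*f(\mathrm{R}\cdot))(\bm{x})$ depends only on values $I(\bm{y})$ with $\lVert \bm{y}-\bm{x}\rVert \leq \rho_0$, uniformly in $\mathrm{R}$. Taking the absolute value and the max preserves this dependence, so $\mathcal{G}\{I\}(\bm{x})$ only depends on local values of $I$, establishing locality and hence LRI by the definition in Section~\ref{sec:equiv_local_oper}. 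The only mild issue is that the supremum over an infinite group must be attained or at least well-defined; compactness of $SO(3)$ together with continuity of the map $\mathrm{R}\mapsto (I*f(\mathrm{R}\cdot))(\bm{x})$ (which holds for continuous $f$, or is already assumed in the discretized setting of Section~\ref{sec:rotations}) ensures this without difficulty.
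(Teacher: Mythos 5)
Your proof is correct and follows essentially the same route as the paper's: translation equivariance from the shift property of convolution, rotation equivariance via the identity $(I(\mathrm{R}_0\cdot) * g)(\bm{x}) = (I * g(\mathrm{R}_0^{-1}\cdot))(\mathrm{R}_0\bm{x})$ applied to $g = f(\mathrm{R}\cdot)$ followed by re-indexing the max over $SO(3)$, and locality from the rotation-invariant compact support of $f$. Your added remarks on the support of the rotated filter and on attainment of the supremum over the compact group are slightly more explicit than the paper's treatment but do not change the argument.
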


\begin{proof}
The equivariance to translations uses $(I(\cdot - \bm{x}_0) * g) (\bm{x}) = (I*g) (\bm{x}-\bm{x}_0)$. Applying this to $g=f(\mathrm{R}\cdot)$, we deduce
\begin{equation}
    \mathcal{G}\{I(\cdot - \bm{x}_0)\}(\bm{x}) 
    = \max_{\mathrm{R} \in SO(3)} \lvert (I* f(\mathrm{R}\cdot))(\bm{x}- \bm{x}_0 )\rvert = \mathcal{G}\{I\}(\bm{x}-\bm{x}_0), 
\end{equation}
as expected. For rotations, we use $(I(\mathrm{R}_0\cdot) * g) (\bm{x}) = (I * g(\mathrm{R}_0^{-1}\cdot))(\mathrm{R}_0\bm{x})$ applied to $g=f(\mathrm{R}\cdot)$ to deduce
\begin{align}
      \mathcal{G}\{I(\mathrm{R}_0 \cdot )\}(\bm{x}) 
      &=
      \max_{\mathrm{R} \in SO(3)} 
      \lvert (I* f(\mathrm{R} \mathrm{R}_0^{-1}\cdot))(\mathrm{R}_0\bm{x})\rvert \\
      &=
      \max_{\mathrm{R} \in SO(3)} 
      \lvert (I* f(\mathrm{R} \cdot))(\mathrm{R}_0\bm{x})\rvert \\
      &=
            \mathcal{G}\{I \}(\mathrm{R}_0\bm{x}),
\end{align}
where the second equality simply exploits that $\mathrm{R}\mathrm{R}_0^{-1}$ describes the complete space $SO(3)$ of 3D rotations when $\mathrm{R}$ varies. The property of being LRI is a direct consequence of the equivariance by translations and rotations, together with the fact that $f$ is compactly supported.
\end{proof}

We remark that the equivariance to translations is simply due to the use of the convolution, while the equivariance to rotations requires pooling over 3D rotations in \eqref{eq:Gg} and \eqref{eq:Gs}.

\section{Equivariant Image Operators via \mbox{SH invariants}} \label{app:equivariance_sse}
%
%
\begin{proposition}
An image operator of the form \eqref{eq:Gsse}
is equivariant to translations and rotations in the sense of \eqref{eq:transinv} and \eqref{eq:rotinv}, and therefore LRI when the $h_n$ are compactly supported. 
\end{proposition}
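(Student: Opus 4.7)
The plan is to verify equivariance to translations and rotations separately for the operator $\mathcal{G}_n^{SSE}$, and then observe that locality follows immediately once the $h_n$ are assumed compactly supported. Translation equivariance should be the easy part: since $h_n Y_{n,m}$ is a fixed function and convolution commutes with translation, I would apply $(I(\cdot - \bm{x}_0) * g)(\bm{x}) = (I*g)(\bm{x}-\bm{x}_0)$ to each term $(I*h_n Y_{n,m})$, note that this survives taking the squared modulus and summing over $m$, and conclude $\mathcal{G}_n^{SSE}\{I(\cdot-\bm{x}_0)\}(\bm{x}) = \mathcal{G}_n^{SSE}\{I\}(\bm{x}-\bm{x}_0)$. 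This mirrors exactly the corresponding step in \ref{app:equivariance_s}.

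Rotation equivariance is the substantive part and requires the steerability of the spherical harmonics. First I would apply the identity $(I(\mathrm{R}_0\cdot) * g)(\bm{x}) = (I * g(\mathrm{R}_0^{-1}\cdot))(\mathrm{R}_0 \bm{x})$ to $g = h_n Y_{n,m}$. Since $h_n$ is purely radial it is rotation invariant, so $(h_n Y_{n,m})(\mathrm{R}_0^{-1}\cdot) = h_n \cdot Y_{n,m}(\mathrm{R}_0^{-1}\cdot)$. Then I would invoke \eqref{eq:YnmYnm'} to expand $Y_{n,m}(\mathrm{R}_0^{-1}\cdot) = \sum_{m'} \mathrm{D}_{\mathrm{R}_0^{-1},n}[m,m'] Y_{n,m'}$, and by linearity of convolution obtain
\begin{equation}
(I(\mathrm{R}_0\cdot) * h_n Y_{n,m})(\bm{x}) = \sum_{m'} \mathrm{D}_{\mathrm{R}_0^{-1},n}[m,m'] \, (I * h_n Y_{n,m'})(\mathrm{R}_0 \bm{x}).
\end{equation}

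The crux of the argument is then the unitarity of the Wigner matrices $\mathrm{D}_{\mathrm{R},n}$. Summing $|\cdot|^2$ over $m$ is computing the squared Euclidean norm of the vector $(\mathrm{D}_{\mathrm{R}_0^{-1},n} v)$ where $v[m'] = (I * h_n Y_{n,m'})(\mathrm{R}_0 \bm{x})$. Since $\mathrm{D}_{\mathrm{R}_0^{-1},n}$ is unitary, this norm equals $\|v\|^2 = \sum_{m'} |v[m']|^2 = \mathcal{G}_n^{SSE}\{I\}(\mathrm{R}_0 \bm{x})$, which is exactly what we need. This is the only nontrivial ingredient; it is what justifies calling the degree-$n$ spherical energy a rotational invariant, and it explains why taking the squared modulus (rather than, say, the modulus alone) of each component is critical.

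Finally, for locality: once the $h_n$ have compact support contained in a ball of radius $\rho_0$, each $h_n Y_{n,m}$ is supported in that same ball, so $(I * h_n Y_{n,m})(\bm{x})$ depends only on $I(\bm{y})$ for $\|\bm{y}-\bm{x}\| \leq \rho_0$, and hence so does $\mathcal{G}_n^{SSE}\{I\}(\bm{x})$. Combined with translation and rotation equivariance, this yields LRI in the sense defined in Section \ref{sec:equiv_local_oper}. The main obstacle is purely notational bookkeeping around the Wigner matrices; no new technical ideas beyond \eqref{eq:YnmYnm'} and the unitarity of $\mathrm{D}_{\mathrm{R},n}$ are required.
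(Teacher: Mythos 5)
Your proof is correct and follows essentially the same route as the paper's: translation equivariance from commutation of convolution with shifts, rotation equivariance by expanding $Y_{n,m}(\mathrm{R}_0^{-1}\cdot)$ via the Wigner matrices and invoking their norm-preserving (unitary) property, and locality from the compact support of the $h_n$. Your explicit remark that the radiality of $h_n$ is what lets the rotation act only on the spherical harmonic factor is a small clarification the paper leaves implicit, but the argument is otherwise identical.
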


\begin{proof}
Given a filter $g$ (e.g. $g=h_nY_{n,m}$),  from the relation $(I(\cdot - \bm{x}_0)* g) (\bm{x})= (I*g) (\bm{x} - \bm{x}_0)$, we deduce that
$\mathcal{G}_n^{SSE} \{ I (\cdot - \bm{x}_0) \} (\bm{x})
    = 
    \sum_{m=-n}^n | ( I * h_nY_{n,m} ) (\bm{x} - \bm{x}_0)|^2 = \mathcal{G}_n^{SSE}\{I\}(\bm{x}-\bm{x}_0)$,
which is \eqref{eq:transinv}. 
Now, using that $(I(\mathrm{R}_0 \cdot) * g) (\bm{x}) = (I* g(\mathrm{R}_0^{-1} \cdot ) ) ( \mathrm{R}_0 \bm{x})$ and~\eqref{eq:YnmYnm'}, we have
\begin{align} \label{eq:computeGnrotate}
    \mathcal{G}_n^{SSE}\{I ( \mathrm{R}_0 \cdot )\}(\bm{x})
    &=
    \sum_{m=-n}^n | ( I * (h_n Y_{n,m} (\mathrm{R}_0^{-1} \cdot)) ) (\mathrm{R}_0 \bm{x})|^2 \nonumber \\
    &=
    \sum_{m=-n}^n \left\lvert \sum_{m'=-n}^n \mathrm{D}_{n,\mathrm{R}_0^{-1}} [m,m'] ( I* h_n Y_{n,m'}) (\mathrm{R}_0 \bm{x}) \right\rvert^2.
\end{align}
The Wigner $\mathrm{D}$-matrix being norm-preserving, we have that $$\sum_m |c_m|^2 = \sum_m \left | \sum_{m'} \mathrm{D}_{n,\mathrm{R}_0^{-1}} [m,m'] c_{m'} \right |^2$$ 
for any $\bm{c}= (c_{-n},\ldots , c_n)$. Applying this relation to $c_m = (I* h_n Y_{n,m}) (\mathrm{R}_0 \bm{x})$, we deduce \eqref{eq:rotinv} from \eqref{eq:computeGnrotate}.
Finally, the LRI is a consequence of the equivariance to global rotations and translations.
\end{proof}

\section{Number of Trainable Parameters}\label{app:number_params}
The trainable parameters include convolutional and fully connected parameters, biases and $\mathrm{C}_n[m]$ harmonic coefficients. In this appendix, we develop the calculation of their number for the S-LRI and SSE-LRI architectures.
\paragraph{S-LRI}
The number of parameters $n_{total(S-h)}$ of the polar separable S-LRI-$h$ architectures is computed as 
\begin{equation}
    n_{total(S-h)} = n_f n_r + n_f + (N+1)^2 n_f + n_f n_c+n_c , 
\end{equation} 
where $n_f$, $n_r$ and $n_c$ are the number of filters, of radial profile parameters (Section~\ref{sec:radial_profiles}) and of classes respectively. 
For example, in the synthetic experiment for $N=3$, it sums up to $2\times7+2+(3+1)^2\times2+2\times2+2=54$. 

For the non-polar separable S-LRI-$h_n$, we have a different trainable radial profile for each degree $n$, resulting in the following:
\begin{equation}
    n_{total(S-h_n)} = (N+1) n_f n_r + n_f + (N+1)^2 n_f + n_f n_c+n_c = 96 . 
\end{equation} 

\paragraph{SSE-LRI}
The number of parameters of the polar separable SSE-LRI-$h$ architectures is
\begin{equation}
    n_{total(SSE-h)} = n_f n_r + n_f  (N+1) + (N+1) n_f n_c + n_c.
\end{equation} 
In the synthetic experiment for $N=3$, it sums up to $2\times7+2\times4 +4\times2\times2+2=40$. 

For the non-polar separable SSE-LRI-$h_n$, it is
\begin{equation}
    n_{total(SSE-h_n)} = (N+1) n_f n_r + n_f  (N+1) + (N+1) n_f n_c + n_c 
 = 82.
\end{equation}



\end{document}